\newtheorem{theorem}{Theorem}%
\newtheorem{definition}{Definition}%
\newcommand{\figurewidthone}{40mm}
\newcommand{\figurewidthtwo}{40mm}
\newcommand{\figurehspace}{3mm}
\def\BibTeX{{\rm B\kern-.05em{\sc i\kern-.025em b}\kern-.08em
    T\kern-.1667em\lower.7ex\hbox{E}\kern-.125emX}}
\newcommand{\linebreakand}{%
  \end{@IEEEauthorhalign}
  \hfill\mbox{}\par
  \mbox{}\hfill\begin{@IEEEauthorhalign}
}
\begin{document}

\title{Training Fair Models in Federated Learning without Data Privacy Infringement}

\author{
\IEEEauthorblockN{Xin Che}
\IEEEauthorblockA{McMaster University\\Hamilton, Canada\\
chex5@mcmaster.ca}
\and
\IEEEauthorblockN{Jingdi Hu}
\IEEEauthorblockA{East China University of Science\\ and Technology, Shanghai, China\\
22013928@mail.ecust.edu.cn}
\and
\IEEEauthorblockN{Zirui Zhou}
\IEEEauthorblockA{Huawei Technologies Canada\\
Burnaby, Canada\\
zirui.zhou@huawei.com}
\linebreakand
\IEEEauthorblockN{Yong Zhang}
\IEEEauthorblockA{Huawei Technologies Canada\\
Burnaby, Canada\\
yong.zhang3@huawei.com}
\and
\IEEEauthorblockN{Lingyang Chu*\thanks{*Corresponding author: Lingyang Chu.}}
\IEEEauthorblockA{McMaster University\\
Hamilton, Canada\\
chul9@mcmaster.ca}
}

\maketitle

\begin{abstract}
Training fair machine learning models becomes more and more important.  As many powerful models are trained by collaboration among multiple parties, each holding some sensitive data, it is natural to explore the feasibility of training fair models in federated learning so that the fairness of trained models, the data privacy of clients, and the collaboration between clients can be fully respected simultaneously.  However, the task of training fair models in federated learning is challenging, since it is far from trivial to estimate the fairness of a model without knowing the private data of the participating parties, which is often constrained by privacy requirements in federated learning. 
In this paper, we first propose a federated estimation method to accurately estimate the fairness of a model without infringing the data privacy of any party.  Then, we use the fairness estimation to formulate a novel problem of training fair models in federated learning. We develop FedFair, a well-designed federated learning framework, which can successfully train a fair model with high performance without data privacy infringement.  Our extensive experiments on three real-world data sets demonstrate the excellent fair model training performance of our method.
\end{abstract}

\begin{IEEEkeywords}
federated learning, model fairness, data privacy, difference of generalized
equal opportunities.
\end{IEEEkeywords}

\section{Introduction}\label{sec1}

Fairness and collaboration are among the top priorities in machine learning and AI applications. As accurate machine learning models are deployed in more and more important applications, enhancing and ensuring fairness in such models becomes critical for AI for social good~\cite{pessach2020algorithmic}.

At the same time, in order to build accurate machine learning models for sophisticated applications, such as finance and medicare, many parties,
such as financial institutions and medical care organizations,
have to collaborate and contribute their own private data~\cite{kairouz2021advances,yang2019federated}. 
A key to successful collaborations is to fully protect the privacy of every party. 
However, it is a grand challenge to build fair machine learning models while preserving the data privacy of all parties in their collaboration.

There are many recent breakthroughs in enhancing fairness in machine learning models~\cite{pessach2020algorithmic}.  
As discussed in Section~\ref{sec:rw}, 
the existing fairness enhancing methods (please see~\cite{pessach2020algorithmic} for a survey) are mainly designed with an assumption of a unified available training data set, and thus cannot easily address the need for federated learning, such as distributed collaboration and privacy-preservation.
The classic federated learning methods~\cite{mcmahan2017communication,chen2017distributed,wang2020federated,huang2021personalized,yang2019federated} do not consider the fairness of machine learning models at all. 
The collaborative fairness methods~\cite{yang2017designing,gollapudi2017profit,yu2020fairness,mohri2019agnostic,li2019fair,lyu2020towards,sim2020collaborative} focus on the fairness of the collaboration relationship among participating parties in the federated learning process, which is substantially different from the goal of training fair models.

Recently, there are some initiative fairness-aware federated learning methods~\cite{du2020fairness,zhang2020fairfl,papadaki2022minimax,hu2022provably,zeng2021improving,mohri2019agnostic} conducted simultaneously with our work. 
These studies focus on improving the fairness of a trained model with respect to various notions of model fairness measurements that are substantially different from ours.

In this paper, we tackle the challenging problem of training fair models in federated learning.  We systematically model the problem and develop a comprehensive solution that theoretically guarantees both fairness and privacy preservation requirements.  Our key idea is to carefully design the fairness constraint based on a general fairness measurement named difference of generalized equal opportunity (DGEO)~\cite{donini2018empirical}, so that the fairness constraint can be enforced in the whole learning process in a federated manner. 
In other words, the fairness constraint is implemented in a fully collaborative and privacy preserving way.  Moreover, the problem of training fair models in federated learning presents a challenging optimization problem to maximize accuracy and ensure fairness.  To solve the problem we develop an elegant solution based on alternating gradient projection~\cite{xu2020unified} in optimization.  

We make the following contributions.
First, we propose a federated estimation method to accurately estimate the fairness of a model without infringing the data privacy of any party.
Second, when the data sets of all participating parties are identically and independently distributed (IID), we prove that the federated estimation method is more accurate than locally estimating the fairness of a model on individual parties and thus leads to a superior fair model training performance in federated learning.
Third, by incorporating the model fairness estimation as a fairness constraint with a widely adopted federated loss function~\cite{mcmahan2017communication}, we formulate our novel FedFair problem. We also develop an effective federated learning framework to tackle this problem without infringing the data privacy of any party.
Last, we conduct extensive experiments on three real-world data sets to demonstrate that our method achieves the best performance in both cases when the data sets of all participating parties are IID and are not IID.


\section{Related Works}\label{sec:rw}

There is a rich body of literature on federated learning. Please see~\cite{kairouz2021advances} and~\cite{yang2019federated} for excellent surveys.  Here, we focus on connecting fair models and federated learning.

The classical federated learning methods~\cite{mcmahan2017communication,chen2017distributed,wang2020federated,huang2021personalized,yang2019federated} are mostly designed to protect privacy instead of training fair models. A straightforward extension to training fair models is to apply fairness constraints locally on each participating party. However, as discussed later in Section~\ref{sec:tfem}, since the local fairness on each party does not provide any guarantee of global fairness on all parties, this leads to an inferior fairness performance due to the inaccurate model fairness measures computed locally on each participating party.

The collaborative fairness methods~\cite{yang2017designing,gollapudi2017profit,yu2020fairness,mohri2019agnostic,li2019fair,lyu2020towards,sim2020collaborative} focus on balancing the rewards paid to the participating parties based on their contributions to the federated learning process.
A typical line of works~\cite{yang2017designing,gollapudi2017profit,yu2020fairness} combine incentive schemes with game theory to determine the payoff received by each party commensurate according to their contributions to the training process.
Using a similar idea, Lyu~\textit{et~al.}~\cite{lyu2020towards} and Sim~\textit{et~al.}~\cite{sim2020collaborative} achieve a better collaborative fairness performance by directly using model accuracy as the reward for the parties.
Some other works~\cite{mohri2019agnostic,li2019fair} advocate egalitarian equity in collaboration by optimizing the performance of the party with the worst performance. 

The collaborative fairness task is substantially different from our study, because our task focuses on enhancing the fairness of a machine learning model, but the collaborative fairness task focuses on the fairness of contribution valuation among participating parties in the federated learning process.

A few recent works~\cite{du2020fairness,zhang2020fairfl,papadaki2022minimax,hu2022provably,zeng2021improving,mohri2019agnostic} simultaneous to our work attempt to develop federated learning frameworks to train fair models with respect to different notions of model fairness.
AgnosticFair~\cite{du2020fairness} and FedFB~\cite{zeng2021improving} attempt to train models that are fair with respect to demographic parity~\cite{calders2009building,dwork2012fairness,calders2010three}.
FairFL~\cite{zhang2020fairfl} improves model fairness by reducing the difference of F1-scores between two groups of data instances.
FLBGL~\cite{hu2022provably} employs a fairness constraint named bounded group loss, which focuses on upper bounding the loss of each group rather than the loss difference between any two groups.
FedMinMax~\cite{papadaki2022minimax} uses MinMax fairness to minimize the expected risk of the worst performing demographic group.

Unlike the above approaches, our work focuses on training fair models based on a different fairness measure named difference of generalized equal opportunities (DGEO)~\cite{donini2018empirical}.
\section{Problem Formulation}\label{sec:problem}

In this section, we first review the fairness constraint based on a general fairness measure named \textbf{difference of generalized equal opportunities} (DGEO)~\cite{donini2018empirical}. 
Then, we formulate the problem of federated fair model training.

\subsection{The DGEO Constraint}\label{sec:DGEO}

Denote by $(\mathbf{x}, s, y)$ a triple of random variables drawn from an unknown distribution $\mathcal{D}$, where $\mathbf{x}\in\mathbb{R}^d$ is a $d$-dimensional vector of features, $s\in\{a, b\}$ is the membership of $\mathbf{x}$ between a pair of pre-defined \textbf{protected groups} (e.g., $a$ for ``female'' and $b$ for ``male''), and $y\in\{1, \ldots, C\}$ is the class label of $\mathbf{x}$. 
A sample of $(\mathbf{x}, s, y)$ is called a \textbf{data instance} (\emph{instance} for short).

Denote by $f_\theta:\mathbb{R}^d\rightarrow\mathbb{R}$ a model parameterized by a set of parameters $\theta$. 
For a loss function $\ell(f_\theta(\mathbf{x}), y)\in\mathbb{R}$,  the DGEO~\cite{donini2018empirical} of $f_\theta$ with respect to the instances contained in the protected groups $\{a, b\}$ and belonging to a \textbf{protected class} $c\in\{1, \ldots, C\}$ is defined as follows.

\begin{definition}
Let $L^{s,c}(\theta)=\mathbb{E}[{\ell(f_\theta(\mathbf{x}), y) | s, y = c}]$ be the expected conditional loss of a model $f_\theta$ on the samples in a protected group $s$ with a protected class label $c$. The \textbf{difference of generalized equal opportunity (DGEO)} of the model $f_\theta$ with respect to the class $c$ is the absolute difference between $L^{a,c}(\theta)$ and $L^{b,c}(\theta)$, that is, $\left|{L^{a,c}(\theta) - L^{b,c}(\theta)}\right|$.
\end{definition}

The above DGEO is a continuous function to measure the fairness of two protected groups.
A smaller DGEO means that $f_\theta$ commits more similar errors on the two protected groups of instances contained in a class $c$, which further indicates $f_\theta$ is fairer with respect to the instances in class $c$~\cite{donini2018empirical}.
On the opposite, a larger DGEO means that $f_\theta$ is more unfair.

Donini~\textit{et~al.}~\cite{donini2018empirical} propose the notion of \textbf{DGEO constraint}. For a given threshold $\epsilon \geq 0$ indicating the maximum unfairness we can tolerate, a model $f_\theta$ is said to be \textbf{$\epsilon$-fair} if it satisfies
\begin{equation}\label{eq:dgeo_constraint}
\left|{L^{a,c}(f_\theta) - L^{b,c}(f_\theta)}\right| \leq \epsilon.
\end{equation}

\subsection{Federated Fair Model Training}

Now we formulate the task of training fair models in federated learning (\emph{federated fair model training} for short).
The task allows a group of parties called \textbf{clients} to collaborate through a server to train a fair and accurate model without infringing the data privacy of any client.

Denote by $U_1, \ldots, U_N$ a group of $N$ clients, 
by $B_1, \ldots, B_N$ the private data sets owned by the clients, respectively, 
by $m_i$ the number of data instances in $B_i$, and
by $\mathcal{D}$ the unknown data distribution of $\cup_{i=1}^N B_i$, the union of the data of all clients.
We define the federated fair model training task as follows.

\begin{definition}[Federated Fair Model Training Task]
Given a pair of protected groups $\{a, b\}$, a protected class $c\in\{1, \ldots, C\}$, and a threshold $\epsilon \geq 0$ indicating the maximum unfairness that can be tolerated,
the \textbf{federated fair model training task} is to develop a method that enables a group of clients $U_1, \ldots, U_N$ to train a model $f_\theta$ in collaboration, such that
\begin{enumerate}
	\item The data instances in the private data set of each client, as well as any information about the distribution of the private data set, are not exposed to any other clients or any third party; and
	\item $f_\theta$ is $\epsilon$-fair for the instances in the class $c$.
	
\end{enumerate}
\end{definition}

The first requirement is exactly the data-privacy-preserving requirement in conventional federated learning~\cite{mcmahan2017communication,chen2017distributed,wang2020federated,huang2021personalized,yang2019federated}.
Here, we consider an honest but curious setting~\cite{kadhe2020fastsecagg} for the data privacy of clients.
Being honest means that every client and the server strictly implements the proposed federated learning algorithm without performing any malicious operation to disrupt the learning process.
Being curious means that the clients and the server are allowed to actively collect information from all the contents (i.e., model parameters, gradients, etc.) they legally receive during the federated learning process.

The second requirement is about the fairness of the collaboratively trained model with respect to input data instances.  
Please note that, in this paper, we are not concerned about the fair valuation of contributions from clients, which is a completely different topic orthogonal to ours.

The above federated fair model training task can be formulated as a constrained optimization problem as follows.
\begin{subequations}\label{eq:04}
\begin{align}
	&\min_{\theta} L(\theta) \label{eq:04a} \\
	&\mbox{s.t. } \left|{L^{a, c}(\theta)  -  L^{b, c}(\theta)}\right| \le \epsilon \label{eq:04b}, 
\end{align}
\end{subequations}
where $L(\theta)=\mathbb{E}[{\ell(f_\theta(\mathbf{x}), y)}]$ is the expected loss of $f_\theta$ over the unknown distribution $\mathcal{D}$, and the DGEO constraint in Equation~\eqref{eq:04b} requires the trained model $f_\theta$ to be $\epsilon$-fair.

It is difficult to find a solution to Equation~\eqref{eq:04}, because we cannot compute the exact values of $L(\theta)$ and $L^{a, c}(\theta)  -  L^{b, c}(\theta)$ without knowing $\mathcal{D}$. To find a practical solution, we need to estimate the values of $L(\theta)$ and $L^{a, c}(\theta)  -  L^{b, c}(\theta)$ while preserving the data privacy of all the clients.

\section{FedFair: A Practical Solution}\label{sec:solution}
In this section, we develop a practical solution to the federated fair model training task. 
We start with an analysis of the problem. 
Then, we present a baseline method that estimates $L^{a, c}(\theta) - L^{b, c}(\theta)$ locally on each client.
Last, we develop our practical solution that estimates $L^{a, c}(\theta) - L^{b, c}(\theta)$ in a federated manner and discuss its advantage over the baseline method.

\subsection{Problem Analysis}
To develop a practical solution to the federated fair model training problem, let us first analyze the opportunities and challenges in estimating $L(\theta)$ and $L^{a, c}(\theta) - L^{b, c}(\theta)$.

Similar to many classical federated learning methods~\cite{mcmahan2017communication,chen2017distributed,wang2020federated,huang2021personalized}, we can compute a \textbf{federated loss} that estimates $L(\theta)$ by 
\begin{equation}
\label{eq:lf}
\sum_{i=1}^N \frac{m_i}{m_{\mbox{total}}} \hat L_i(\theta),
\end{equation}
where $m_{\mbox{total}} = \sum_{i=1}^N m_i$ is the total number of data instances owned by all clients, and
\begin{equation}
\hat L_i(\theta) = \frac{1}{m_i}\sum_{(\mathbf{x}, s, y)\in B_i}\ell(f_\theta(\mathbf{x}), y)
\end{equation}
is the empirical loss of $f_\theta$ on the private data set $B_i$.

Computing the federated loss does not infringe the data privacy of any client, because each empirical loss $\hat L_i(\theta)$ is privately computed by the corresponding client $U_i$, and all empirical losses are sent to the server to compute the federated loss.
For every client, the server only knows the empirical loss of the client and the number of instances in the private data set. In other words, the client does not expose to the server any instance or any data distribution information.

The real challenge is to estimate $L^{a, c}(\theta) - L^{b, c}(\theta)$ without infringing the data privacy of any client. 
Without a carefully designed method to estimate $L^{a, c}(\theta) - L^{b, c}(\theta)$, a client may expose to the server some distribution information about the sensitive attribute value or the target class.

In the rest of this section, we first introduce a baseline method to estimate $L^{a, c}(\theta) - L^{b, c}(\theta)$ and discuss the limitations of this method.
Then, we present our federated estimation method to estimate $L^{a, c}(\theta) - L^{b, c}(\theta)$ more accurately.
Last, we develop an optimization method to find a good solution to the federated fair model training task.

\subsection{A Local Estimation Approach}
To estimate $L^{a, c}(\theta)  -  L^{b, c}(\theta)$ while keeping the data privacy of all clients, a baseline method is to let every client $U_i$ use its private data set $B_i$ to compute its \textbf{local estimation} by
$\hat L_i^{a, c}(\theta) - \hat L_i^{b, c}(\theta)$,
where $\hat L_i^{a, c}(\theta)$ and $\hat L_i^{b, c}(\theta)$ are the estimators of the expected conditional losses $L^{a,c}(\theta)$ and $L^{b,c}(\theta)$, respectively.

The estimators $\hat L_i^{a, c}(\theta)$ and $\hat L_i^{b, c}(\theta)$ 
are computed on the private data set $B_i$ by
\begin{equation}
	\hat L_i^{s,c}(\theta) = \frac{1}{m_i^{s, c}} \sum_{(\mathbf{x}, s, c)\in B_i} \ell(f_\theta(\mathbf{x}), c),
\end{equation}
where $s\in\{a, b\}$ indicates the group membership of a data instance $(\mathbf{x}, s, c)$, and $m_i^{s,c}$ is the number of data instances in $B_i$ that belong to a group $s$ and have a class label $c$.

We use the local estimations computed on the private data sets of all the clients to build $N$ \textbf{local DGEO constraints} as $\left|{\hat L_i^{a, c}(\theta) - \hat L_i^{b, c}(\theta)} \right| \leq \epsilon, \forall i\in\{1, \ldots, N\}$.

By incorporating these constraints into the federated loss in Equation~\eqref{eq:lf},
we convert the problem in Equation~\eqref{eq:04} into the following \textbf{locally constrained optimization (LCO) problem}.
\begin{subequations}\label{eq:oragin}
\begin{align}
	&\min_{\theta} \sum_{i=1}^N \frac{m_i}{m_{\mbox{total}}} \hat L_i(\theta) \label{simu:oragina}\\
	&\mbox{s.t. } \left|{\hat L_i^{a, c}(\theta) - \hat L_i^{b, c}(\theta)}\right| \le \epsilon, \forall i\in\{1, \ldots, N\} \label{simu:oraginab}
\end{align}
\end{subequations}

We show in Section~\ref{sec:tackle} that the LCO problem can be easily solved without infringing the data privacy of any client.

Assuming that the data instances in the data set $B_i$ are independently and identically drawn from $\mathcal{D}$,
what is the chance that a feasible solution to the LCO problem may achieve a good fairness performance on the distribution $\mathcal{D}$? 

Unfortunately, as indicated in the following analysis, the answer is not encouraging.

\begin{theorem}\label{thm:lco_quality}
Denote by $\sigma_i^2$, $i\in\{1, \ldots, N\}$, the variance of the estimation $\hat L_i^{a,c}(\theta) - \hat L_i^{b,c}(\theta)$ on the $i$-th client, and by $\sigma_{min}^2 = \min (\sigma_1^2, \ldots, \sigma_N^2)$ the minimum variance. 
If the data instances in the data set $B_i$ are independently and identically drawn from $\mathcal{D}$,
then for any feasible solution $\theta$ to the LCO problem and any real number $r > 0$, $P\left(\left|{L^{a,c}(\theta) - L^{b,c}(\theta)}\right| < \epsilon + r\right) \geq  1 - \frac{\sigma_{min}^2}{r^2}$.
\end{theorem}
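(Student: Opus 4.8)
The plan is to combine the unbiasedness of the local estimators (already established in the excerpt) with Chebyshev's inequality and the triangle inequality. First I would single out an index $j\in\{1,\ldots,N\}$ for which $\sigma_j^2 = \sigma_{min}^2$; all of the argument will be carried out using this single ``best'' client, which is precisely why the resulting bound depends on $\sigma_{min}^2$ rather than on any average of the $\sigma_i^2$.

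Second, I would exploit feasibility. Since $\theta$ is feasible for the LCO problem, its $j$-th local DGEO constraint holds, so $\left|\hat L_j^{a,c}(\theta) - \hat L_j^{b,c}(\theta)\right| \le \epsilon$. Writing $D(\theta) = L^{a,c}(\theta) - L^{b,c}(\theta)$ for the true DGEO and $\hat D_j(\theta) = \hat L_j^{a,c}(\theta) - \hat L_j^{b,c}(\theta)$ for the $j$-th local estimate, the triangle inequality gives $\left|D(\theta)\right| \le \left|\hat D_j(\theta)\right| + \left|\hat D_j(\theta) - D(\theta)\right| \le \epsilon + \left|\hat D_j(\theta) - D(\theta)\right|$. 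Hence the ``bad'' event $\left\{\left|D(\theta)\right| \ge \epsilon + r\right\}$ is contained in the event $\left\{\left|\hat D_j(\theta) - D(\theta)\right| \ge r\right\}$, namely the event that the $j$-th local estimate deviates from its target by at least $r$.

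Third, I would bound the probability of that deviation event. Because the instances in $B_j$ are i.i.d. draws from $\mathcal{D}$, the local estimator $\hat D_j(\theta)$ is unbiased for $D(\theta)$ (the excerpt states $\mathbb{E}[\hat D_j(\theta)] = D(\theta)$), and by definition its variance is $\sigma_j^2 = \sigma_{min}^2$. Chebyshev's inequality then yields $P\!\left(\left|\hat D_j(\theta) - D(\theta)\right| \ge r\right) \le \sigma_{min}^2/r^2$. Combining this with the event inclusion from the previous step and taking complements gives $P\!\left(\left|D(\theta)\right| < \epsilon + r\right) \ge 1 - \sigma_{min}^2/r^2$, which is exactly the claimed inequality.

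The one point requiring care --- and the main obstacle --- is the status of $\theta$: if $\theta$ is viewed as a fixed point that merely happens to satisfy the constraints (the natural reading, since the probability is taken over the sampling used to form the estimates), the argument above is immediate; but if $\theta$ is the data-dependent output of an optimization over the $B_i$, then $\hat D_j(\theta)$ is no longer a simple average of i.i.d.\ terms and Chebyshev does not apply verbatim. I would resolve this either by stating the result for a fixed feasible $\theta$, or, if a data-dependent $\theta$ is intended, by noting that the inclusion $\left\{\left|D(\theta)\right| \ge \epsilon + r\right\} \subseteq \left\{\left|\hat D_j(\theta) - D(\theta)\right| \ge r\right\}$ still holds pointwise and invoking a uniform (worst-case over the feasible set) version of Chebyshev's bound. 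Everything else is routine.
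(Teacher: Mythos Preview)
Your proposal is correct and matches the paper's own proof essentially step for step: unbiasedness of the local estimator, feasibility giving $|\hat D_j(\theta)|\le\epsilon$, the triangle-inequality/event-inclusion argument, and Chebyshev's inequality applied to the client with the smallest variance. The only cosmetic difference is that the paper runs the argument for every $i$ and then selects the tightest bound at the end, whereas you fix $j$ with $\sigma_j^2=\sigma_{min}^2$ at the outset; your additional remark on the data-dependence of $\theta$ is a valid caveat that the paper also does not resolve rigorously.
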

\begin{proof}
For a client $U_i$, $i\in\{1, \ldots, N\}$, define
$\mu(\theta)=L^{a,c}(\theta) - L^{b,c}(\theta)$
and
$R_i(\theta)={\hat L}_i^{a,c}(\theta) - {\hat L}_i^{b,c}(\theta)$,
where $\mu(\theta)$ is a random variable because $\theta$ is sampled from the random feasible region defined by Equation~\eqref{simu:oraginab}.

Since all the data instances in the data set $B_i$ are independently and identically drawn from $\mathcal{D}$, $\mathbb{E}(R_i(\theta)) = \mu(\theta)$ always
holds for every sample of $\theta$.
By substituting $\mathbb{E}(R_i(\theta)) = \mu(\theta)$ into Chebyshev's inequality, we have
\begin{equation}
	P\left(\left|{R_i(\theta) - \mu(\theta)}\right| < r\right) \geq 1 - \frac{\sigma_i^2}{r^2}
\end{equation}
for every sample of $\theta$ and any real number $r>0$.

Since a feasible solution $\theta$ to the LCO problem always satisfies $\left|{R_i(\theta)}\right| \leq \epsilon$, 
if $\theta$ satisfies $\left| {R_i(\theta) - \mu(\theta)} \right| < r$, then $\left| {\mu(\theta)} \right| < \epsilon + r$ always holds. 
This further indicates
$P\left(\left|{\mu(\theta)}\right| < \epsilon + r\right) \geq P\left( \left| {R_i(\theta) - \mu(\theta)} \right| < r\right)$,
which means
\begin{equation}
	P\left(\left|{\mu(\theta)}\right| < \epsilon + r\right) \geq  1 - \frac{\sigma_i^2}{r^2}.
\end{equation}

Plugging $\mu(\theta)=L^{a,c}(\theta) - L^{b,c}(\theta)$ into the above inequality, we have that the following holds for all $i$ in $\{1, \ldots, N\}$.
$P\left(\left| {L^{a,c}(\theta) - L^{b,c}(\theta)} \right| < \epsilon + r\right) \geq  1 - \frac{\sigma_i^2}{r^2}$.
This gives $N$ independent inequalities that hold for the clients $U_1, \ldots, U_N$, respectively.
In all these $N$ inequalities, the tightest one is
\begin{equation}
	P\left(\left| {L^{a,c}(\theta) - L^{b,c}(\theta)} \right| < \epsilon + r\right) \geq  1 - \frac{\sigma_{min}^2}{r^2},
\end{equation}
because the other $N-1$ inequalities automatically hold when this inequality holds. 
The theorem follows.
\end{proof}

According to Theorem~\ref{thm:lco_quality}, 
when the data instances in the data set $B_i$ are independently and identically drawn from $\mathcal{D}$,
a solution $\theta$ to the LCO problem is $(\epsilon + r)$-fair with a probability no smaller than a lower bound $1 - \frac{\sigma_{min}^2}{r^2}$.

Unfortunately, this lower bound may be small in practice, because the variances $\sigma_1^2, \ldots, \sigma_N^2$ may not be small when the numbers of private data instances owned by the clients are not large enough.
As a result, solving the LCO problem may not have a large probability of achieving a good model fairness performance.

Moreover, as observed in the experiments in Section~\ref{sec:fmtp}, the LCO problem may even be infeasible when a large number of clients introduce a large number of local DGEO constraints. Because the intersection of the feasible regions induced by these constraints may have a good chance to be empty due to the variances $\sigma_1^2, \ldots, \sigma_N^2$.

\subsection{The Federated Estimation Method}\label{sec:tfem}

To tackle the above issues with the local DGEO constraints in the LCO problem, we propose to estimate $L^{a, c}(\theta)  -  L^{b, c}(\theta)$ by 
\begin{equation}\label{eq:fed-est}
	\frac{1}{N}\sum_{i=1}^N \left(\hat L_i^{a,c}(\theta) - \hat L_i^{b,c}(\theta)\right).
\end{equation}

We call this \textbf{federated estimation}
because it is computed by taking the average of the local estimations computed by all the clients.

Similar to how we compute the federated loss, we compute the federated estimation on the server. 
This does not infringe on the data privacy of any client because the local estimations are privately computed by the clients before they are sent to the server.

We use the federated estimation to develop a \textbf{federated DGEO constraint} as
\begin{equation}
	\left|{\frac{1}{N}\sum_{i=1}^N \left(\hat L_i^{a,c}(\theta) - \hat L_i^{b,c}(\theta)\right)}\right| \leq \epsilon,
\end{equation}
and further incorporate this constraint with the federated loss in Equation~\eqref{eq:lf} to convert the  constrained optimization problem in Equation~\eqref{eq:04} into the following \textbf{FedFair problem}.
\begin{subequations}\label{eq:fedfair}
\begin{align}
	&\min_{\theta} \sum_{i=1}^N \frac{m_i}{m_{\mbox{total}}} \hat L_i(\theta) \label{eq:fedfair_a} \\
	&\mbox{s.t. } \left|{\frac{1}{N}\sum_{i=1}^N \left(\hat L_i^{a,c}(\theta) - \hat L_i^{b,c}(\theta)\right)}\right| \le \epsilon \label{eq:fedfair_b}
\end{align}
\end{subequations}

Let us analyze the model fairness performance of a solution to the FedFair problem when the data instances in the data set $B_i$ are independently and identically drawn from $\mathcal{D}$.

\begin{theorem}\label{thm:fedfair}
Denote by $\sigma_i^2$, $i\in\{1, \ldots, N\}$, the variance of the local estimation $\hat L_i^{a,c}(\theta) - \hat L_i^{b,c}(\theta)$ on client $U_i$. 
Let $\sigma_{avg}^2=\frac{1}{N}\sum_{i=1}^N \sigma_i^2$. 
If the data instances in the data set $B_i$ are independently and identically drawn from $\mathcal{D}$,
then for any feasible solution $\theta$ to the FedFair problem and any $r>0$, $P\left(\left|{L^{a,c}(\theta) - L^{b,c}(\theta)}\right| < \epsilon + r\right) \geq  1 - \frac{\sigma_{avg}^2}{Nr^2}$.
\end{theorem}
\begin{proof}
Since the private data sets $B_1, \ldots, B_N$ are independently and identically drawn from $\mathcal{D}$, the local estimations computed on these private data sets are independent of each other.
Therefore, the variance of the federated estimation (see Equation~\eqref{eq:fed-est}) is
\begin{equation}
	\begin{aligned}
		\sigma_{fed}^2
		=Var\left[{\frac{1}{N}\sum_{i=1}^N \left({\hat L}_i^{a,c}(\theta) - {\hat L}_i^{b,c}(\theta)\right)}\right]= \frac{\sigma_{avg}^2}{N}.
	\end{aligned}
\end{equation}

Following the proof of Theorem~\ref{thm:lco_quality} and Chebyshev's inequality, we have, for any $r>0$,
\begin{equation}
	P\left(\left|{L^{a,c}(\theta) - L^{b,c}(\theta)}\right| < \epsilon + r\right) \geq  1 - \frac{\sigma_{fed}^2}{r^2}.
\end{equation}
Since $\sigma_{fed}^2=\frac{\sigma_{avg}^2}{N}$, the theorem follows.
\end{proof}

According to Theorem~\ref{thm:fedfair}, when the data instances in the data set $B_i$ are independently and identically drawn from $\mathcal{D}$,
a feasible solution $\theta$ to the FedFair problem is $(\epsilon + r)$-fair with a probability no smaller than a lower bound $1 - \frac{\sigma_{avg}^2}{Nr^2}$. 
Interestingly, this means that the solution $\theta$ has a higher probability to be $(\epsilon + r)$-fair when the number of clients increases, because the variable $N$ in the lower bound $1 - \frac{\sigma_{avg}^2}{Nr^2}$ is the number of clients.

Based on Theorems~\ref{thm:lco_quality} and~\ref{thm:fedfair}, we can compare the model fairness performance of the FedFair approach and the LCO approach when the data instances in the data set $B_i$ are independently and identically drawn from $\mathcal{D}$.
The key is to analyze the difference between $\sigma_{avg}^2$ and $\sigma_{min}^2$, which depends on the differences among $\sigma_1^2, \ldots, \sigma_N^2$.

Since we use the same formula to compute every local estimation based on the data instances drawn from the same distribution $\mathcal{D}$, the differences among $\sigma_1^2, \ldots, \sigma_N^2$ are determined by the differences among the numbers of data instances owned by different clients.
In typical federated learning applications~\cite{mcmahan2017communication,chen2017distributed,yang2019federated}, clients often have similar amounts of data. 
This produces small differences among $\sigma_1^2, \ldots, \sigma_N^2$, which further means $\sigma_{avg}^2$ and $\sigma_{min}^2$ are close to each other. 

More often than not, the number of clients $N$ is large in typical federated learning scenarios~\cite{kairouz2021advances,yang2019federated}. 
Thus, the variance $\sigma_{fed}^2=\frac{\sigma_{avg}^2}{N}$ of the federated estimation is much smaller than the minimum variance $\sigma_{min}^2$ of the local estimations. 
In consequence, based on Theorems~\ref{thm:lco_quality} and~\ref{thm:fedfair}, a solution to the FedFair problem can have a much higher probability of achieving good model fairness performance than a solution to the LCO problem. 

Theorems~\ref{thm:lco_quality} and~\ref{thm:fedfair} explain the advantage of FedFair over LCO when the data instances in the data set $B_i$ are independently and identically drawn from $\mathcal{D}$.
Although our above analysis is on independently and identically sampled data, as to be discussed later in Section~\ref{sec:exp}, we conduct extensive experiments on data sets that are drawn independently and identically from $\mathcal{D}$ and on data sets that are not drawn independently and identically from $\mathcal{D}$.
FedFair achieves the best performance on all the data sets.

\subsection{Solving the FedFair and LCO Problems}\label{sec:tackle}

In this subsection, we discuss how to solve the FedFair problem and the LCO problem without infringing on the data privacy of any client. 
We first convert the FedFair problem to a nonconvex-concave min-max problem. Then, we apply the alternating gradient projection (AGP) algorithm~\cite{xu2020unified} to 
tackle the min-max problem without infringing the data privacy of any client.
We also extend AGP to tackle the LCO problem in a similar privacy-preserving way.

AGP~\cite{xu2020unified} is designed to obtain a $\delta$-stationary point of a nonconvex-concave minmax problem in $\mathcal{O}(\delta^{-4})$ iterations. It alternatively updates primal and dual variables by employing simple projected gradient steps at each iteration.

Denote by 
$\hat D_i(\theta)= \hat L_i^{a,c}(\theta) - \hat L_i^{b,c}(\theta), i\in\{1, \ldots, N\}$,
the local estimation of client $U_i$.
We rewrite the federated DGEO constraint in the FedFair problem to
\begin{equation}\label{eq:twocons}
	\begin{aligned}
		-\epsilon \leq \frac{1}{N}\sum_{i=1}^N \hat D_i(\theta) \leq \epsilon,
	\end{aligned}
\end{equation}
which represents two constraints on $\theta$.
In this way, the FedFair problem can be equivalently converted to a min-max problem
\begin{equation}\label{minmax}
\min_{\theta}\max_{\lambda_a, \lambda_b \geq 0} {\mathcal L}(\theta, \lambda_a,\lambda_b),
\end{equation}
where
\begin{equation}
\begin{aligned}
     	{\mathcal L}(\theta, \lambda_a, \lambda_b) =  
	&\sum_{i=1}^N \frac{m_i}{m_{\mbox{total}}} \hat L_i(\theta)
	+ \lambda_a \left(\frac{1}{N}\sum_{i=1}^N \hat D_i(\theta) - \epsilon \right) \\
	& -\lambda_b \left(\frac{1}{N}\sum_{i=1}^N \hat D_i(\theta) + \epsilon \right)
\end{aligned}
\end{equation}
is a Lagrangian function with the Lagrange multipliers $\lambda_a$ and $\lambda_b$ introduced for the two constraints in Equation~\eqref{eq:twocons}, respectively.

The Lagrangian function is nonconvex in terms of $\theta$ and concave in terms of $\lambda_a$ and $\lambda_b$.
Thus, the problem in Equation~\eqref{minmax} is a nonconvex-convave min-max problem that can be directly solved using the AGP method~\cite{xu2020unified}.

Each iteration of AGP updates $\theta$, $\lambda_a$, and $\lambda_b$ using the gradients computed from a regularized loss function
\begin{equation}
    \bar{\mathcal L}(\theta, \lambda_a, \lambda_b) = {\mathcal L}(\theta, \lambda_a, \lambda_b) - \frac{\gamma}{2}(\lambda_a^2+\lambda_b^2),
\end{equation}
where $\frac{\gamma}{2}(\lambda_a^2+\lambda_b^2)$ is a regularization term that makes  $\bar{\mathcal L}(\theta, \lambda_a, \lambda_b)$ strongly concave with respect to $\lambda_a$ and $\lambda_b$ to achieve a faster convergence speed, and $\gamma\geq 0$ is a small hyper-parameter to control the level of regularization.

Specifically, the $k$-th iteration of AGP consists of one gradient step to update $\theta$ and one projected gradient descent step to update $\lambda_a$ and $\lambda_b$.

The first step updates $\theta$ with a step size $\alpha$ by 
\begin{equation}
\label{primal-update}
	\theta^{k+1} = \theta^k - {\alpha}\sum_{i=1}^N\nabla_i(\theta^k),
\end{equation}
where
\begin{equation}
\label{eq:gradient-1}
	\nabla_{i}(\theta^k) =
	\frac{m_i}{m_{\mbox{total}}} \nabla \hat L_i({\theta^k} )
	+ \frac{\lambda_a^k - \lambda_b^k}{N}\nabla \hat D_i({\theta^k})
\end{equation}
is the gradient computed by client $U_i$.
Here, $\nabla \hat L_i({\theta^k})$ and $\nabla \hat D_i({\theta^k})$ are respectively the gradients of $\hat L_i(\theta^k)$ and $\hat D_i({\theta^k})$ with respect to $\theta^k$.
These gradients are computed by client $U_i$ using $B_i$.

The second step updates $\lambda_a$ and $\lambda_b$ with a step size $\beta$ by 
\begin{equation}\label{dual-update-1}
	\lambda_{a}^{k+1} = \max\left({(1 - \gamma\beta)\lambda_{a}^{k}
	+ \frac{\beta}{N}\sum_{i=1}^N\hat D_i({\theta^k}) 
	- \beta\epsilon}, 0\right)
\end{equation}
and
\begin{equation}\label{dual-update-2}
	\lambda_{b}^{k+1} = \max\left({(1-\gamma\beta)\lambda_{b}^{k}
	- \frac{\beta}{N}\sum_{i}^N\hat D_i({\theta^k}) 
	- \beta\epsilon}, 0\right).
\end{equation}

The above iteration continues until AGP converges. 
Algorithm~\ref{alg:fedfair} shows how we deploy AGP in a federated learning framework to tackle the FedFair problem.

\begin{algorithm}[t]\small
\DontPrintSemicolon
\SetKwInput{KwInput}{Input}               
\SetKwInput{KwOutput}{Output} 
\KwInput{The clients $U_1, \ldots, U_N$ holding the private data sets $B_1, \ldots, B_N$, respectively; a pre-defined pair of groups $\{a, b\}$; a class $c\in\{1, \ldots, C\}$; the hyperparameters $\alpha$, $\beta$ and $\gamma$ of AGP~\cite{xu2020unified}; and a server $Q$.}
\KwOutput{A fair model $f_\theta$ that achieves a small DEGO on $\mathcal{D}$ with respect to the groups $\{a, b\}$ and the class $c$.}

$Q$ initializes $\theta^1$, $\lambda_{a}^1$ and $\lambda_{b}^1$ in the same way as AGP~\cite{xu2020unified}.

\For{$k=1,2,\dots,K$}{
	$Q$ broadcasts $\theta^k$, $\lambda_{a}^k$ and $\lambda_{b}^k$ to the clients.

	Each client $U_i$ uses $B_i$ to compute $\hat D_i(\theta^k)$, $\nabla \hat D_i({\theta^k})$ and $\nabla \hat L_i({\theta^k})$, and send them to $Q$.
	
	$Q$ updates $\theta^{k+1}$, $\lambda_a^{k+1}$ and $\lambda_b^{k+1}$ by \eqref{primal-update}, \eqref{dual-update-1} and \eqref{dual-update-2}, respectively. 
}

\Return{The fair model $f_\theta$.}
\caption{Tackling the FedFair Problem}\label{alg:fedfair}
\end{algorithm}
\vspace{-0.13cm}

The LCO problem can be solved in a similar way as the FedFair problem using AGP~\cite{xu2020unified}. 

First, we introduce a pair of Lagrange multipliers, denoted by $\lambda_{a_i}$ and $\lambda_{b_i}$, $i\in\{1, \ldots, N\}$, for each local DGEO constraint rewritten as 
$-\epsilon \leq \hat L_i^{a, c}(\theta) - \hat L_i^{b, c}(\theta) \leq \epsilon$.

Then, we adapt Algorithm~\ref{alg:fedfair} by
replacing Equation~\eqref{eq:gradient-1} by
\begin{equation}
	\nabla_{i}(\theta^k) =
		\frac{m_i}{m_{\mbox{total}}} \nabla \hat L_i({\theta^k} )
		+ (\lambda_{a_i}^k - \lambda_{b_i}^k)\nabla \hat D_i({\theta^k}),
\end{equation}
and replacing Equations~\eqref{dual-update-1} and~\eqref{dual-update-2}, respectively, by
\begin{equation}
	\lambda_{a_i}^{k+1} = \max\left({(1-\gamma\beta)\lambda_{a_i}^{k}
		+ {\beta} \hat D_i({\theta^k}) 
		- \beta\epsilon}, 0\right)
\end{equation}
and
\begin{equation}
	\lambda_{b_i}^{k+1} = \max\left({(1-\gamma\beta)\lambda_{b_i}^{k}
		- {\beta}\hat D_i({\theta^k}) 
		- \beta\epsilon}, 0\right).
\end{equation}

Since both Algorithm~\ref{alg:fedfair} and its extension for the LCO problem implement the AGP algorithm, they have the same convergence behavior as AGP~\cite{xu2020unified}.
The hyper-parameters are set as $\alpha=0.05$, $\beta=0.05$, and $\gamma=0.001$.
To prevent divergence in training, we reduce the value of $\alpha$ by $\alpha \leftarrow 0.1\times \alpha$ per $20,000$ gradient descend steps.


Algorithm~\ref{alg:fedfair} protects the data privacy of the clients in a similar way as~\cite{mcmahan2017communication,huang2021personalized,li2020federated}. In each iteration, there is no communication between the clients, and the information sent from a client to the server only contains $\hat D_i(\theta^k)$, $\nabla \hat D_i({\theta^k})$, and $\nabla \hat L_i({\theta^k})$, which do not expose the private data sets of any clients or any information about the distribution of the private datasets.

The overall time cost of Algorithm~\ref{alg:fedfair} scales linearly with respect to the number of clients, because 
every client conducts the same computational process and there is no communication between clients.
Comparing to the classic federated learning~\cite{mcmahan2017communication} which does not involve the federated DGEO constraint, the major extra time cost introduced by the federated DGEO constraint is the time for each client to compute and communicate $\hat D_i(\theta^k)$ and $\nabla \hat D_i({\theta^k})$.
Such extra time cost also scales linearly with respect to the number of clients.

\section{Experimental Results}\label{sec:exp}
In this section, we conduct a series of experiments to evaluate the performance of our proposed methods for federated fair model training, and compare with the state-of-the-art baselines. We first describe the experiment settings and then present the experimental results.

\begin{figure*}[!htbp]
\centering
\subfigure[\scriptsize {DRUG (IID), LR}]{\includegraphics[width=\figurewidthone]{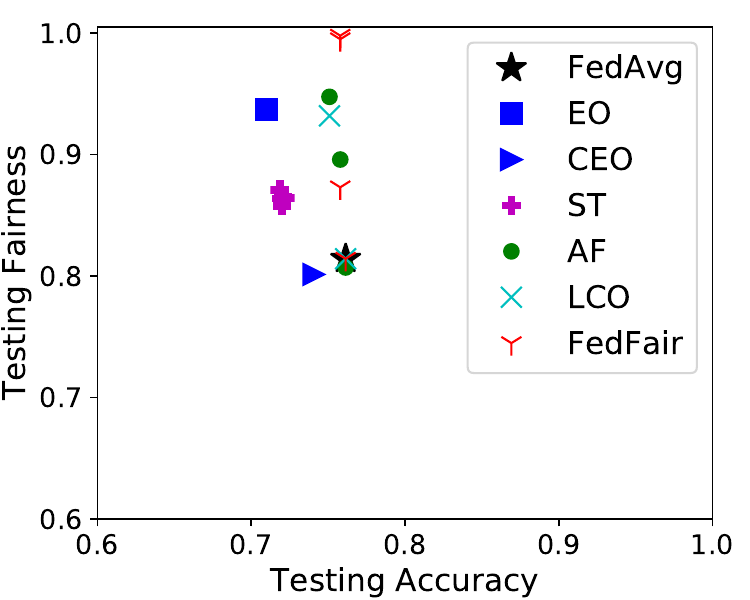}}
\hspace{\figurehspace}
\subfigure[\scriptsize DRUG (IID), NN]{\includegraphics[width=\figurewidthone]{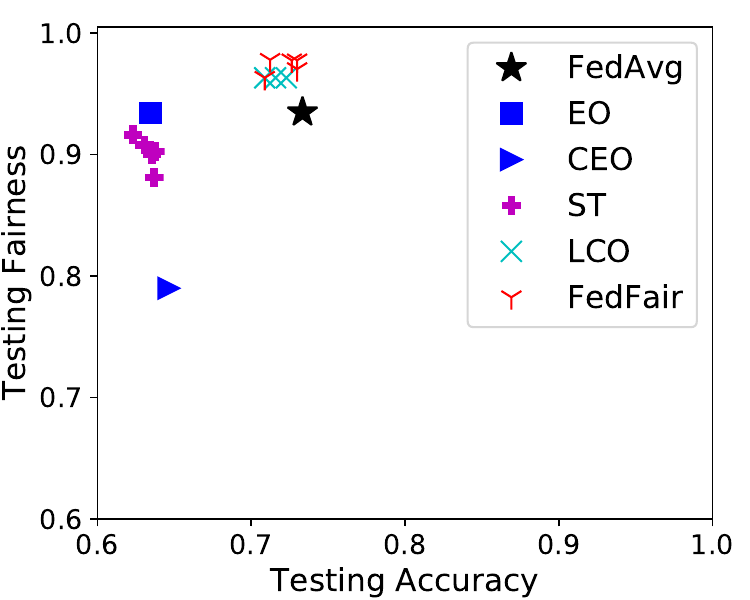}}
\hspace{\figurehspace}
\subfigure[\scriptsize DRUG (non-IID), LR]{\includegraphics[width=\figurewidthone]{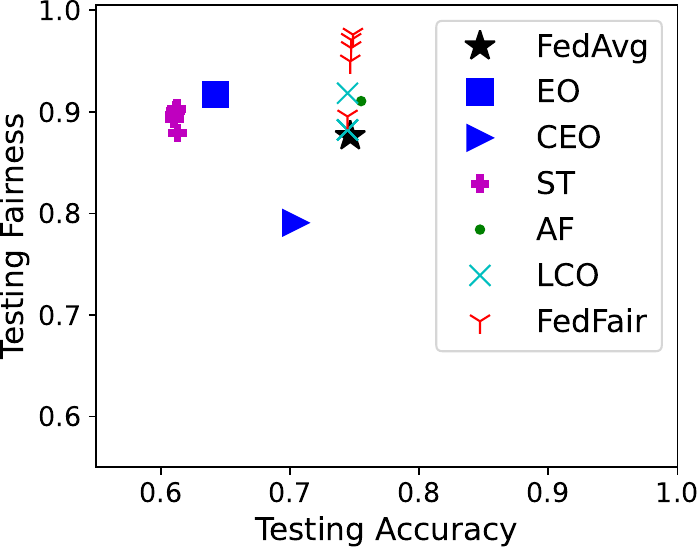}}
\hspace{\figurehspace}
\subfigure[\scriptsize DRUG (non-IID), NN]{\includegraphics[width=\figurewidthone]{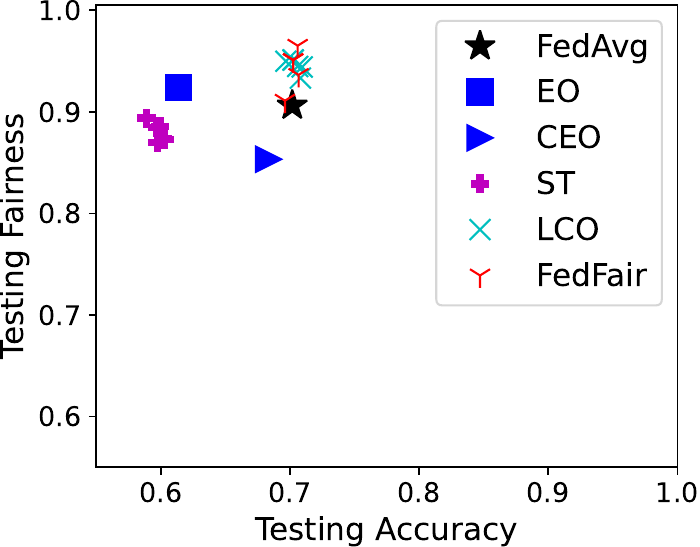}}
\subfigure[\scriptsize COMPASS (IID), LR]{\includegraphics[width=\figurewidthone]{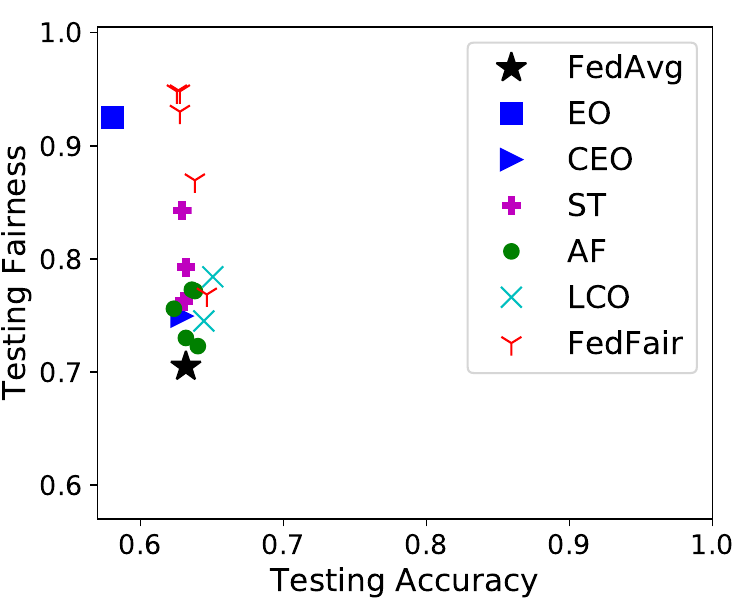}}
\hspace{\figurehspace}
\subfigure[\scriptsize COMPASS (IID), NN]{\includegraphics[width=\figurewidthone]{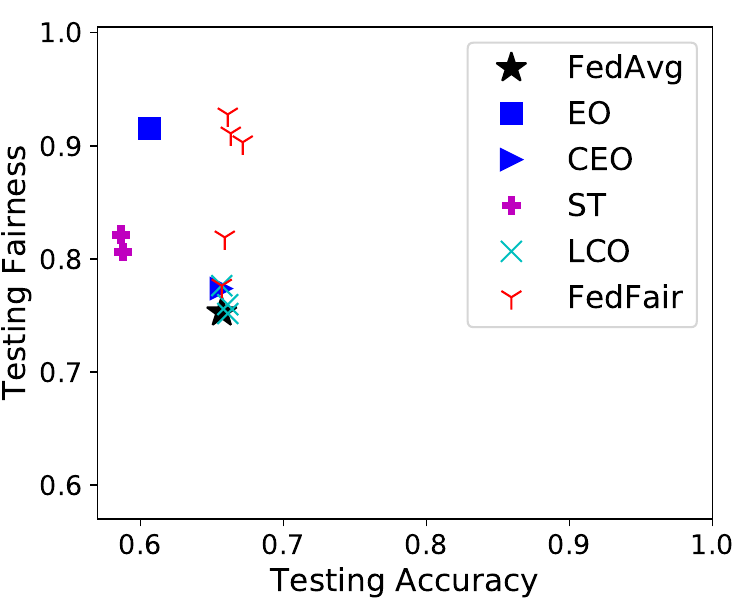}}
\hspace{\figurehspace}
\subfigure[\scriptsize COMPASS (non-IID), LR]{\includegraphics[width=\figurewidthone]{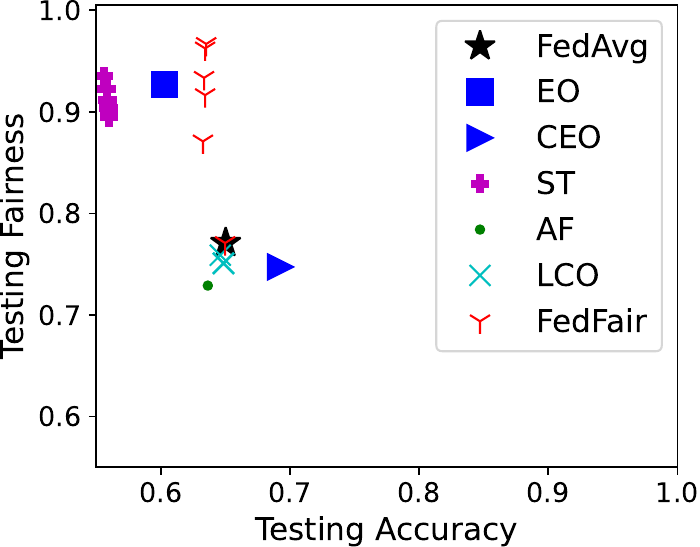}}
\hspace{\figurehspace}
\subfigure[\scriptsize COMPASS (non-IID), NN]{\includegraphics[width=\figurewidthone]{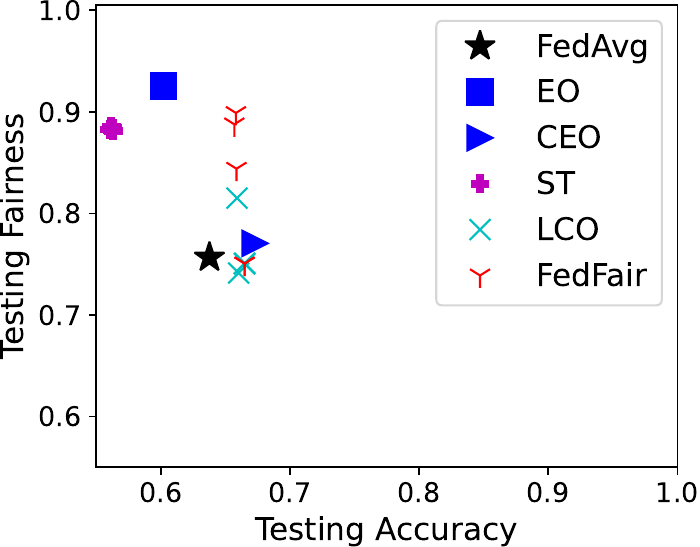}}
\subfigure[\scriptsize ADULT (IID), LR]{\includegraphics[width=\figurewidthone]{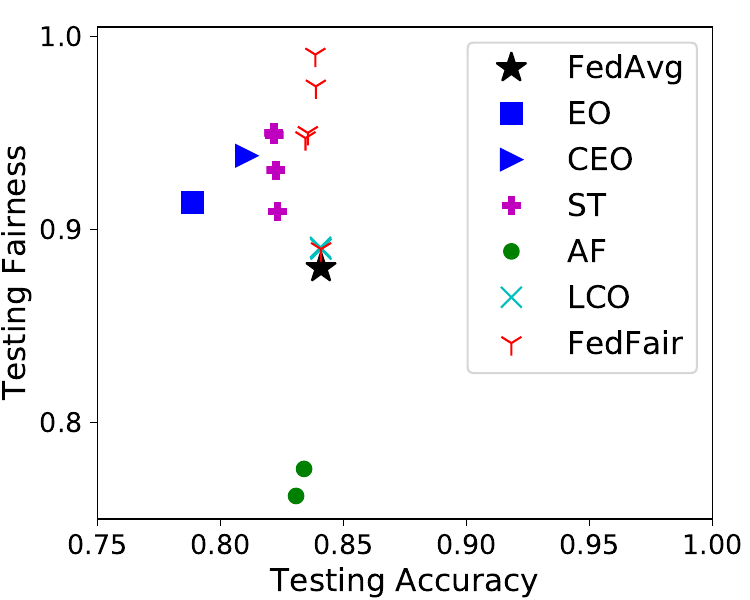}}
\hspace{\figurehspace}
\subfigure[\scriptsize ADULT (IID), NN]{\includegraphics[width=\figurewidthone]{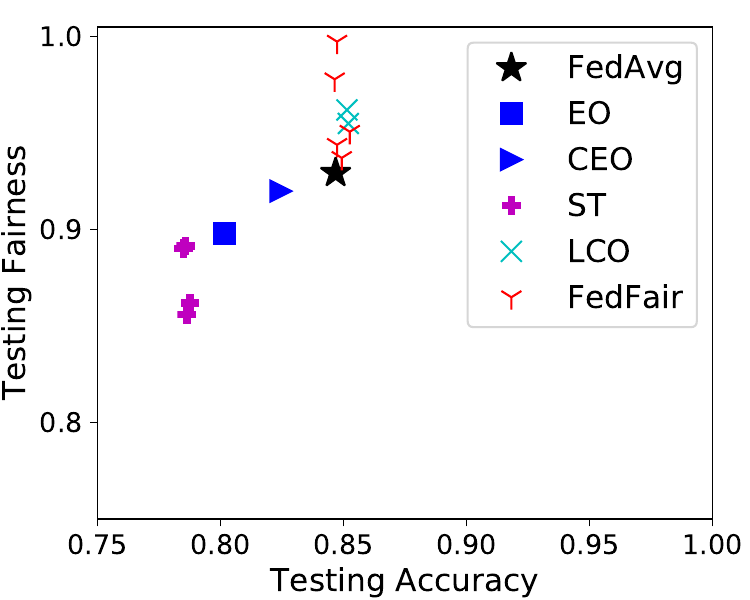}}
\hspace{\figurehspace}
\subfigure[\scriptsize ADULT (non-IID), LR]{\includegraphics[width=\figurewidthone]{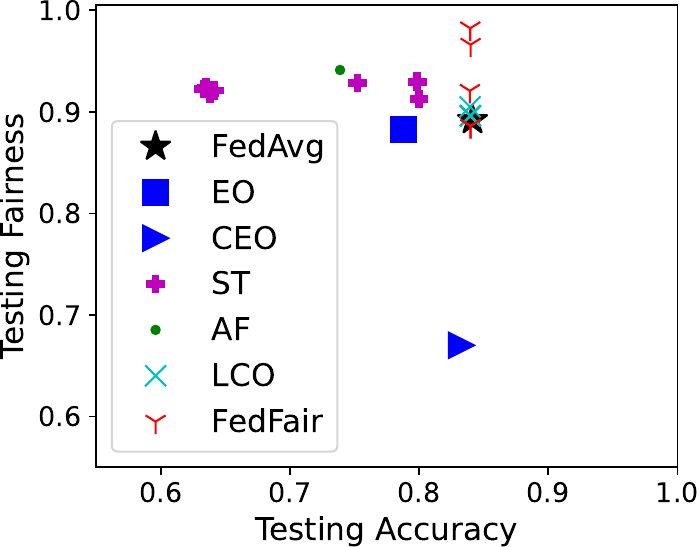}}
\hspace{\figurehspace}
\subfigure[\scriptsize ADULT (non-IID), NN]{\includegraphics[width=\figurewidthone]{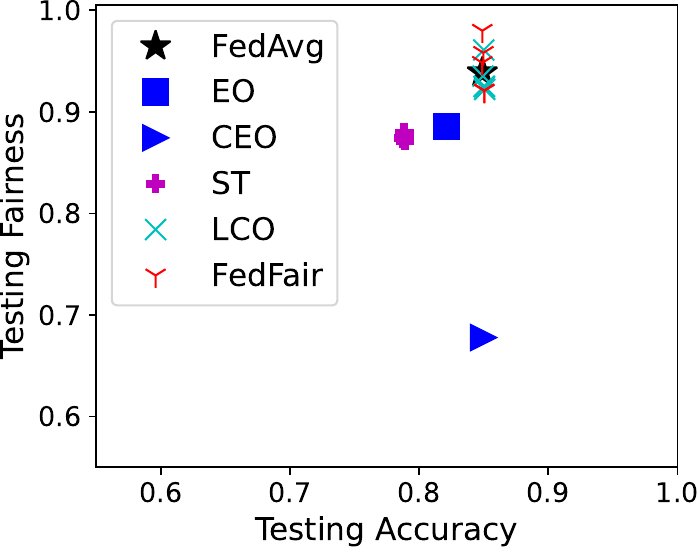}}
\caption{The testing fairness and testing accuracies of all compared methods. 
``LR'' means that the trained model is a logistic regression model; 
``NN'' means that the trained model is a neural network. 
``IID'' and ``non-IID'' refer to the IID setting and non-IID setting of data sets, respectively, as described in Section~\ref{sec:data}.}
\label{fig:overall_perf}
\end{figure*}

\subsection{Experiment Settings}

In our experiments, we evaluate the performance of a series of seven methods.  First of all, FedFair is our proposed method solving the FedFair Problem in Equation~(\ref{eq:fedfair}).  Second, LCO is the method solving the LCO problem in Equation~(\ref{eq:oragin}). Third, we use the state-of-the-art federated fair model training method AgnosticFair (AF)~\cite{du2020fairness} as a baseline.

As the fourth and the fifth methods in our experiments, we compare with a separate training (ST) method and the classical federated learning method FedAvg~\cite{mcmahan2017communication}. We also extend FedAvg by two post-processing fairness enhancing methods, equalized odds (EO)~\cite{hardt2016equality} and calibrated equalized odds (CEO)~\cite{pleiss2017fairness}, to obtain the sixth and the seventh methods in our experiments.

The ST method simply asks each client $U_i$ to use its own private training data set $B_i$ to train a fair model by solving the following constrained optimization problem.
\begin{subequations}
	\begin{align}
    		&\min_{\theta} {\hat L}_i(\theta)\\
    		&\mbox{s.t. } \left| {{\hat L}_i^{a, c}(\theta) - {\hat L}_i^{b, c}(\theta)} \right| \le \epsilon 
	\end{align}
\end{subequations}

The post-processing methods, such as EO and CEO, are used to separately post-process the model trained by FedAvg for each client based on its private training data set. This will produce $N$ fairness enhanced models for $N$ clients. We report the average performance and the variance of those $N$ models.

For all the methods except AF, we use two types of models for $f_\theta$. One is logistic regression, the other one is a fully connected neural network with two hidden layers, where the first and the second hidden layers contain eight and four ReLU units, respectively.

We do not report the performance of AF for the neural network model because the source code of AF is not applicable to training neural networks.

The source codes of all the baseline methods are provided by the authors of~\cite{du2020fairness,mcmahan2017communication,hardt2016equality,pleiss2017fairness}. 
FedFair and LCO are implemented in PyTorch 1.4, the source code is available at: \url{https://bit.ly/fairfed}.
All the experiments are conducted on Dell Alienware with Intel(R) Core(TM) i9-9980XE CPU, 128G memory, NVIDIA 1080Ti, and Ubuntu 16.04.

\subsection{Data Sets}\label{sec:data}
We adopt the following three benchmark data sets that have been widely used in literature to evaluate the performances of fair model training methods~\cite{mehrabi2019survey,donini2018empirical}. 

For each of the following data sets, we use an IID setting and a non-IID setting.
For the \textbf{IID setting}, we uniformly sample the data for every client, such that the data of all clients are independently and identically distributed. 
For the \textbf{non-IID setting}, we adopt the non-IID data partitioning strategy proposed by Huang~\textit{et~al.}~\cite{huang2021personalized}, such that the data of clients are not independently and identically distributed.

The ADULT data set~\cite{kohavi1996scaling} contains 45,222 data instances. Each data instance consists of 14 features of a person. 
The binary class label of a data instance indicates whether a person's annual income is above 50,000 dollars or not. 
Following the settings of Hardt~\textit{et~al.}~\cite{hardt2016equality}, we use ``female'' and ``male'' as the pair of protected groups, and use ``above 50,000 dollars'' as the protected class.
For both the IID setting and the non-IID setting, we use 40,000 data instances as the training data, use 5,222 data instances as the testing data, and use 50 clients each holding $\frac{1}{50}$ of the training data as the private training data set. 
 
The COMPAS data set~\cite{compass_dataset} contains 5,278 data instances. Every data instance consists of 16 features of a person. 
The binary class label of a data instance indicates whether the person is a recidivist or not. 
Following the settings of Hardt~\textit{et~al.}~\cite{hardt2016equality}, we use ``African-American'' and ``Caucasian'' as the pair of protected groups, and use ``not a recidivist'' as the protected class.
For both the IID setting and the non-IID setting, we use 4,800 data instances as the training data set, 478 data instances as the testing data, and 20 clients each holding $\frac{1}{20}$ of the training data as the private training data set.

The DRUG data set~\cite{fehrman2017five} contains 1,885 data instances. Each data instance represents a person characterized by 12 features. 
The binary class label of a data instance indicates whether the person abuses volatile substance or not. 
We use ``white'' and ``non-white'' as the pair of protected groups, and use ``not abuse volatile substance'' as the protected class.
For both the IID setting and the non-IID setting, we use 1,600 data instances as the training data, 285 data instances as the testing data, and 10 clients each holding $\frac{1}{10}$ of the training data as the private training data set. 

For all the data sets, every categorical feature is converted into a set of binary features using one-hot encoding.
The private training data sets owned by the clients are used to train models.
The testing data sets are used to evaluate 
the prediction accuracies and the fairness of the models produced by all the methods.

\begin{table*}[t]
\caption{The performance of all the methods on the data sets in the IID setting. The accuracy, fairness, and harmonic mean are denoted by AC, FR, and HM, respectively. 
``LR'' means that the trained model is a logistic regression model and ``NN'' means that the trained model is a neural network. 
The column OPT reports the optimal hyper-parameter $\tau$ for AF and the optimal hyper-parameter $\epsilon$ for ST, LCO, and FedFair, to achieve the best harmonic mean of fairness and accuracy.}
\label{Table:best_perf_IID}
\resizebox{0.98\textwidth}{!}{%
\begin{NiceTabular}{c|cccc|cccc|cccc}
\hline
\multirow{2}{*}{Method} & \multicolumn{4}{c|}{DRUG (IID), LR} & \multicolumn{4}{c|}{COMPASS (IID), LR} & \multicolumn{4}{c}{ADULT (IID), LR} \\ \cline{2-13} 
 & \multicolumn{1}{c|}{OPT} & \multicolumn{1}{c|}{AC} & \multicolumn{1}{c|}{FR} & HM & \multicolumn{1}{c|}{OPT} & \multicolumn{1}{c|}{AC} & \multicolumn{1}{c|}{FR} & HM & \multicolumn{1}{c|}{OPT} & \multicolumn{1}{c|}{AC} & \multicolumn{1}{c|}{FR} & HM \\ \hline
FedAvg & \multicolumn{1}{c|}{n/a} & \multicolumn{1}{c|}{\textbf{\begin{tabular}[c]{@{}c@{}}0.73$\pm$0.02\end{tabular}}} & \multicolumn{1}{c|}{\begin{tabular}[c]{@{}c@{}}0.88$\pm$0.05\end{tabular}} & \begin{tabular}[c]{@{}c@{}}0.79$\pm$0.02\end{tabular} & \multicolumn{1}{c|}{n/a} & \multicolumn{1}{c|}{\begin{tabular}[c]{@{}c@{}}0.68$\pm$0.02\end{tabular}} & \multicolumn{1}{c|}{\begin{tabular}[c]{@{}c@{}}0.79$\pm$0.05\end{tabular}} & \begin{tabular}[c]{@{}c@{}}0.73$\pm$0.04\end{tabular} & \multicolumn{1}{c|}{n/a} & \multicolumn{1}{c|}{\begin{tabular}[c]{@{}c@{}}0.83$\pm$0.01\end{tabular}} & \multicolumn{1}{c|}{\begin{tabular}[c]{@{}c@{}}0.83$\pm$0.04\end{tabular}} & \begin{tabular}[c]{@{}c@{}}0.83$\pm$0.02\end{tabular} \\ \hline
EO & \multicolumn{1}{c|}{n/a} & \multicolumn{1}{c|}{\begin{tabular}[c]{@{}c@{}}0.70$\pm$0.04\end{tabular}} & \multicolumn{1}{c|}{\begin{tabular}[c]{@{}c@{}}0.92$\pm$0.06\end{tabular}} & \begin{tabular}[c]{@{}c@{}}0.80$\pm$0.04\end{tabular} & \multicolumn{1}{c|}{n/a} & \multicolumn{1}{c|}{\begin{tabular}[c]{@{}c@{}}0.59$\pm$0.02\end{tabular}} & \multicolumn{1}{c|}{\begin{tabular}[c]{@{}c@{}}0.93$\pm$0.03\end{tabular}} & \begin{tabular}[c]{@{}c@{}}0.72$\pm$0.01\end{tabular} & \multicolumn{1}{c|}{n/a} & \multicolumn{1}{c|}{\begin{tabular}[c]{@{}c@{}}0.78$\pm$0.01\end{tabular}} & \multicolumn{1}{c|}{\begin{tabular}[c]{@{}c@{}}0.93$\pm$0.03\end{tabular}} & \begin{tabular}[c]{@{}c@{}}0.85$\pm$0.01\end{tabular} \\ \hline
CEO & \multicolumn{1}{c|}{n/a} & \multicolumn{1}{c|}{\begin{tabular}[c]{@{}c@{}}0.66$\pm$0.04\end{tabular}} & \multicolumn{1}{c|}{\begin{tabular}[c]{@{}c@{}}0.87$\pm$0.02\end{tabular}} & \begin{tabular}[c]{@{}c@{}}0.75$\pm$0.03\end{tabular} & \multicolumn{1}{c|}{n/a} & \multicolumn{1}{c|}{\begin{tabular}[c]{@{}c@{}}0.65$\pm$0.01\end{tabular}} & \multicolumn{1}{c|}{\begin{tabular}[c]{@{}c@{}}0.77$\pm$0.01\end{tabular}} & \begin{tabular}[c]{@{}c@{}}0.70$\pm$0.01\end{tabular} & \multicolumn{1}{c|}{n/a} & \multicolumn{1}{c|}{\begin{tabular}[c]{@{}c@{}}0.80$\pm$0.00\end{tabular}} & \multicolumn{1}{c|}{\begin{tabular}[c]{@{}c@{}}0.96$\pm$0.01\end{tabular}} & \begin{tabular}[c]{@{}c@{}}0.87$\pm$0.00\end{tabular} \\ \hline
ST & \multicolumn{1}{c|}{0.01} & \multicolumn{1}{c|}{\begin{tabular}[c]{@{}c@{}}0.70$\pm$0.02\end{tabular}} & \multicolumn{1}{c|}{\begin{tabular}[c]{@{}c@{}}0.88$\pm$0.02\end{tabular}} & \begin{tabular}[c]{@{}c@{}}0.78$\pm$0.01\end{tabular} & \multicolumn{1}{c|}{0.10} & \multicolumn{1}{c|}{\begin{tabular}[c]{@{}c@{}}0.66$\pm$0.01\end{tabular}} & \multicolumn{1}{c|}{\begin{tabular}[c]{@{}c@{}}0.88$\pm$0.03\end{tabular}} & \begin{tabular}[c]{@{}c@{}}0.75$\pm$0.02\end{tabular} & \multicolumn{1}{c|}{$10^{-4}$} & \multicolumn{1}{c|}{\begin{tabular}[c]{@{}c@{}}0.82$\pm$0.00\end{tabular}} & \multicolumn{1}{c|}{\begin{tabular}[c]{@{}c@{}}0.93$\pm$0.02\end{tabular}} & \begin{tabular}[c]{@{}c@{}}0.87$\pm$0.01\end{tabular} \\ \hline
AF & \multicolumn{1}{c|}{0.05} & \multicolumn{1}{c|}{\textbf{0.73$\pm$0.02}}   & \multicolumn{1}{c|}{0.88$\pm$0.06}   & 0.80$\pm$0.04   & \multicolumn{1}{c|}{0.05} & \multicolumn{1}{c|}{0.68$\pm$0.02}   & \multicolumn{1}{c|}{0.88$\pm$0.08}   & 0.76$\pm$0.04   & \multicolumn{1}{c|}{0.05} & \multicolumn{1}{c|}{0.72$\pm$0.06}   & \multicolumn{1}{c|}{0.83$\pm$0.08}   & 0.77$\pm$0.03   \\ \hline
LCO & \multicolumn{1}{c|}{0.40} & \multicolumn{1}{c|}{0.72$\pm$0.02}   & \multicolumn{1}{c|}{0.90$\pm$0.05}   & 0.80$\pm$0.02   & \multicolumn{1}{c|}{0.48} & \multicolumn{1}{c|}{\textbf{0.69$\pm$0.02}}   & \multicolumn{1}{c|}{0.79$\pm$0.02}   & 0.74$\pm$0.02   & \multicolumn{1}{c|}{0.80} & \multicolumn{1}{c|}{\textbf{0.84$\pm$0.00}}   & \multicolumn{1}{c|}{0.86$\pm$0.04}   & 0.85$\pm$0.02  \\ \hline
FedFair & \multicolumn{1}{c|}{0.07} & \multicolumn{1}{c|}{0.72$\pm$0.02}   & \multicolumn{1}{c|}{\textbf{0.93$\pm$0.04}}   & \textbf{0.81$\pm$0.02}   & \multicolumn{1}{c|}{$10^{-4}$} & \multicolumn{1}{c|}{0.67$\pm$0.03}   & \multicolumn{1}{c|}{\textbf{0.95$\pm$0.02}}   & \textbf{0.79$\pm$0.02}   & \multicolumn{1}{c|}{0.10} & \multicolumn{1}{c|}{0.83$\pm$0.00}   & \multicolumn{1}{c|}{\textbf{0.97$\pm$0.02}}   & \textbf{0.90$\pm$0.01}   \\ \hline
\hline
\multirow{2}{*}{Method} & \multicolumn{4}{c|}{DRUG (IID), NN} & \multicolumn{4}{c|}{COMPASS (IID), NN} & \multicolumn{4}{c}{ADULT (IID), NN} \\ \cline{2-13} 
 & \multicolumn{1}{c|}{OPT} & \multicolumn{1}{c|}{AC} & \multicolumn{1}{c|}{FR} & HM & \multicolumn{1}{c|}{OPT} & \multicolumn{1}{c|}{AC} & \multicolumn{1}{c|}{FR} & HM & \multicolumn{1}{c|}{OPT} & \multicolumn{1}{c|}{AC} & \multicolumn{1}{c|}{FR} & HM \\ \hline
FedAvg                  & \multicolumn{1}{c|}{n/a}   & \multicolumn{1}{c|}{\textbf{0.72$\pm$0.02}}   & \multicolumn{1}{c|}{0.85$\pm$0.07}   & \textbf{0.78$\pm$0.04}   & \multicolumn{1}{c|}{n/a}  & \multicolumn{1}{c|}{\textbf{0.68$\pm$0.02}}   & \multicolumn{1}{c|}{0.81$\pm$0.05}   & 0.74$\pm$0.03   & \multicolumn{1}{c|}{n/a}  & \multicolumn{1}{c|}{0.84$\pm$0.00}   & \multicolumn{1}{c|}{0.87$\pm$0.04}   & 0.85$\pm$0.02   \\ \hline
EO                      & \multicolumn{1}{c|}{n/a}   & \multicolumn{1}{c|}{0.66$\pm$0.07}   & \multicolumn{1}{c|}{\textbf{0.94$\pm$0.05}}   & 0.77$\pm$0.05   & \multicolumn{1}{c|}{n/a}  & \multicolumn{1}{c|}{0.61$\pm$0.02}   & \multicolumn{1}{c|}{0.91$\pm$0.04}   & 0.73$\pm$0.01   & \multicolumn{1}{c|}{n/a}  & \multicolumn{1}{c|}{0.79$\pm$0.01}   & \multicolumn{1}{c|}{0.91$\pm$0.03}   & 0.85$\pm$0.01   \\ \hline
CEO                     & \multicolumn{1}{c|}{n/a}   & \multicolumn{1}{c|}{0.61$\pm$0.04}   & \multicolumn{1}{c|}{0.89$\pm$0.06}   & 0.72$\pm$0.04   & \multicolumn{1}{c|}{n/a}  & \multicolumn{1}{c|}{0.66$\pm$0.00}   & \multicolumn{1}{c|}{0.78$\pm$0.01}   & 0.72$\pm$0.00   & \multicolumn{1}{c|}{n/a}  & \multicolumn{1}{c|}{0.82$\pm$0.00}   & \multicolumn{1}{c|}{0.93$\pm$0.01}   & 0.87$\pm$0.00   \\ \hline
ST                      & \multicolumn{1}{c|}{0.07}  & \multicolumn{1}{c|}{0.63$\pm$0.01}   & \multicolumn{1}{c|}{0.85$\pm$0.04}   & 0.72$\pm$0.02   & \multicolumn{1}{c|}{0.20}     & \multicolumn{1}{c|}{0.60$\pm$0.01}   & \multicolumn{1}{c|}{0.86$\pm$0.02}   & 0.71$\pm$0.01   & \multicolumn{1}{c|}{0.40}     & \multicolumn{1}{c|}{0.79$\pm$0.00}   & \multicolumn{1}{c|}{0.88$\pm$0.02}   & 0.83$\pm$0.01   \\ \hline
LCO                     & \multicolumn{1}{c|}{0.45}  & \multicolumn{1}{c|}{0.70$\pm$0.02}   & \multicolumn{1}{c|}{0.87$\pm$0.09}   & \textbf{0.78$\pm$0.04}   & \multicolumn{1}{c|}{0.60}     & \multicolumn{1}{c|}{\textbf{0.68$\pm$0.02}}   & \multicolumn{1}{c|}{0.81$\pm$0.03}   & 0.74$\pm$0.02   & \multicolumn{1}{c|}{0.80}     & \multicolumn{1}{c|}{\textbf{0.85$\pm$0.00}}   & \multicolumn{1}{c|}{0.92$\pm$0.04}   & 0.89$\pm$0.02   \\ \hline
FedFair                 & \multicolumn{1}{c|}{0.005} & \multicolumn{1}{c|}{0.70$\pm$0.02}   & \multicolumn{1}{c|}{0.87$\pm$0.06}   & \textbf{0.78$\pm$0.04}   & \multicolumn{1}{c|}{0.01}     & \multicolumn{1}{c|}{\textbf{0.68$\pm$0.02}}   & \multicolumn{1}{c|}{\textbf{0.94$\pm$0.04}}   & \textbf{0.79$\pm$0.02}   & \multicolumn{1}{c|}{0.10}     & \multicolumn{1}{c|}{\textbf{0.85$\pm$0.00}}   & \multicolumn{1}{c|}{\textbf{0.98$\pm$0.02}}   & \textbf{0.91$\pm$0.01}   \\ \hline
\end{NiceTabular}%
}
\end{table*}

\begin{table*}[t]
\caption{The performance of all the methods on the data sets in the non-IID setting.
The accuracy, fairness, and harmonic mean are denoted by AC, FR, and HM, respectively. 
``LR'' means that the trained model is a logistic regression model and ``NN'' means that the trained model is a neural network. 
The column OPT reports the optimal hyper-parameter $\tau$ for AF and the optimal hyper-parameter $\epsilon$ for ST, LCO, and FedFair, to achieve the best harmonic mean of fairness and accuracy.
}
\label{Table:best_perf_non_IID}
\resizebox{0.98\textwidth}{!}{%
\begin{NiceTabular}{c|cccc|cccc|cccc}
\hline
\multirow{2}{*}{Method} & \multicolumn{4}{c|}{DRUG (non-IID), LR} & \multicolumn{4}{c|}{COMPASS (non-IID), LR} & \multicolumn{4}{c}{ADULT (non-IID), LR} \\ \cline{2-13} 
 & \multicolumn{1}{c|}{OPT} & \multicolumn{1}{c|}{AC} & \multicolumn{1}{c|}{FR} & HM & \multicolumn{1}{c|}{OPT} & \multicolumn{1}{c|}{AC} & \multicolumn{1}{c|}{FR} & HM & \multicolumn{1}{c|}{OPT} & \multicolumn{1}{c|}{AC} & \multicolumn{1}{c|}{FR} & HM \\ \hline
FedAvg                  & \multicolumn{1}{c|}{n/a}   & \multicolumn{1}{c|}{0.75$\pm$0.01}  & \multicolumn{1}{c|}{0.88$\pm$0.06}  & 0.81$\pm$0.03  & \multicolumn{1}{c|}{n/a}  & \multicolumn{1}{c|}{0.65$\pm$0.01}  & \multicolumn{1}{c|}{0.77$\pm$0.02}  & 0.71$\pm$0.01  & \multicolumn{1}{c|}{n/a}  & \multicolumn{1}{c|}{\textbf{0.84$\pm$0.00}} & \multicolumn{1}{c|}{0.89$\pm$0.01} & 0.87$\pm$0.01 \\ \hline
EO                      & \multicolumn{1}{c|}{n/a}   & \multicolumn{1}{c|}{0.64$\pm$0.09}  & \multicolumn{1}{c|}{0.92$\pm$0.05}  & 0.76$\pm$0.07  & \multicolumn{1}{c|}{n/a}  & \multicolumn{1}{c|}{0.60$\pm$0.02}  & \multicolumn{1}{c|}{0.93$\pm$0.03}  & 0.73$\pm$0.01  & \multicolumn{1}{c|}{n/a}  & \multicolumn{1}{c|}{0.79$\pm$0.07} & \multicolumn{1}{c|}{0.88$\pm$0.05} & 0.83$\pm$0.05 \\ \hline
CEO                     & \multicolumn{1}{c|}{n/a}   & \multicolumn{1}{c|}{0.70$\pm$0.00}   & \multicolumn{1}{c|}{0.79$\pm$0.02}   & 0.74$\pm$0.01   & \multicolumn{1}{c|}{n/a}  & \multicolumn{1}{c|}{\textbf{0.69$\pm$0.01}}   & \multicolumn{1}{c|}{0.75$\pm$0.01}   & 0.72$\pm$0.01   & \multicolumn{1}{c|}{n/a}  & \multicolumn{1}{c|}{0.83$\pm$0.00}   & \multicolumn{1}{c|}{0.67$\pm$0.00}   & 0.74$\pm$0.00   \\ \hline
ST                      & \multicolumn{1}{c|}{0.30}  & \multicolumn{1}{c|}{0.61$\pm$0.01}   & \multicolumn{1}{c|}{0.90$\pm$0.01}   & 0.73$\pm$0.01   & \multicolumn{1}{c|}{0.10} & \multicolumn{1}{c|}{0.56$\pm$0.00}   & \multicolumn{1}{c|}{0.94$\pm$0.01}   & 0.70$\pm$0.00   & \multicolumn{1}{c|}{0.01} & \multicolumn{1}{c|}{0.80$\pm$0.00}   & \multicolumn{1}{c|}{0.93$\pm$0.01}   & 0.86$\pm$0.00   \\ \hline
AF                      & \multicolumn{1}{c|}{1.00}  & \multicolumn{1}{c|}{\textbf{0.76$\pm$0.01}}   & \multicolumn{1}{c|}{0.91$\pm$0.06}   & 0.82$\pm$0.02   & \multicolumn{1}{c|}{0.50} & \multicolumn{1}{c|}{0.64$\pm$0.01}   & \multicolumn{1}{c|}{0.73$\pm$0.04}   & 0.68$\pm$0.02   & \multicolumn{1}{c|}{2.50} & \multicolumn{1}{c|}{0.74$\pm$0.01}   & \multicolumn{1}{c|}{0.94$\pm$0.03}   & 0.83$\pm$0.02   \\ \hline
LCO                     & \multicolumn{1}{c|}{0.40}  & \multicolumn{1}{c|}{0.74$\pm$0.01}   & \multicolumn{1}{c|}{0.92$\pm$0.07}   & 0.82$\pm$0.03   & \multicolumn{1}{c|}{0.50} & \multicolumn{1}{c|}{0.65$\pm$0.01}   & \multicolumn{1}{c|}{0.76$\pm$0.02}   & 0.70$\pm$0.01   & \multicolumn{1}{c|}{1.00} & \multicolumn{1}{c|}{\textbf{0.84$\pm$0.00}}   & \multicolumn{1}{c|}{0.90$\pm$0.02}   & 0.87$\pm$0.01   \\ \hline
FedFair                 & \multicolumn{1}{c|}{0.02}  & \multicolumn{1}{c|}{0.75$\pm$0.01}   & \multicolumn{1}{c|}{\textbf{0.98$\pm$0.02}}   & \textbf{0.85$\pm$0.01}   & \multicolumn{1}{c|}{0.005} & \multicolumn{1}{c|}{0.64$\pm$0.01}   & \multicolumn{1}{c|}{\textbf{0.97$\pm$0.04}}   & \textbf{0.77$\pm$0.01}   & \multicolumn{1}{c|}{0.10} & \multicolumn{1}{c|}{\textbf{0.84$\pm$0.00}}   & \multicolumn{1}{c|}{\textbf{0.98$\pm$0.01}}   & \textbf{0.91$\pm$0.00}   \\ \hline		
\hline
\multirow{2}{*}{Method} & \multicolumn{4}{c|}{DRUG (non-IID), NN} & \multicolumn{4}{c|}{COMPASS (non-IID), NN} & \multicolumn{4}{c}{ADULT (non-IID), NN} \\ \cline{2-13} 
 & \multicolumn{1}{c|}{OPT} & \multicolumn{1}{c|}{AC} & \multicolumn{1}{c|}{FR} & HM & \multicolumn{1}{c|}{OPT} & \multicolumn{1}{c|}{AC} & \multicolumn{1}{c|}{FR} & HM & \multicolumn{1}{c|}{OPT} & \multicolumn{1}{c|}{AC} & \multicolumn{1}{c|}{FR} & HM \\ \hline
FedAvg                  & \multicolumn{1}{c|}{n/a}   & \multicolumn{1}{c|}{0.70$\pm$0.02}   & \multicolumn{1}{c|}{0.91$\pm$0.05}   & 0.79$\pm$0.02   & \multicolumn{1}{c|}{n/a}  & \multicolumn{1}{c|}{0.64$\pm$0.02}   & \multicolumn{1}{c|}{0.76$\pm$0.05}   & 0.69$\pm$0.02   & \multicolumn{1}{c|}{n/a}  & \multicolumn{1}{c|}{\textbf{0.85$\pm$0.00}}   & \multicolumn{1}{c|}{0.94$\pm$0.02}   & 0.89$\pm$0.01   \\ \hline
EO                      & \multicolumn{1}{c|}{n/a}   & \multicolumn{1}{c|}{0.61$\pm$0.06}   & \multicolumn{1}{c|}{0.92$\pm$0.04}   & 0.74$\pm$0.05   & \multicolumn{1}{c|}{n/a}  & \multicolumn{1}{c|}{0.60$\pm$0.01}   & \multicolumn{1}{c|}{\textbf{0.93$\pm$0.04}}   & 0.73$\pm$0.01   & \multicolumn{1}{c|}{n/a}  & \multicolumn{1}{c|}{0.82$\pm$0.07}   & \multicolumn{1}{c|}{0.89$\pm$0.05}   & 0.85$\pm$0.04   \\ \hline
CEO                     & \multicolumn{1}{c|}{n/a}   & \multicolumn{1}{c|}{0.68$\pm$0.02}   & \multicolumn{1}{c|}{0.84$\pm$0.08}   & 0.75$\pm$0.02   & \multicolumn{1}{c|}{n/a}  & \multicolumn{1}{c|}{\textbf{0.67$\pm$0.02}}   & \multicolumn{1}{c|}{0.77$\pm$0.03}   & 0.72$\pm$0.02   & \multicolumn{1}{c|}{n/a}  & \multicolumn{1}{c|}{\textbf{0.85$\pm$0.00}}   & \multicolumn{1}{c|}{0.68$\pm$0.00}   & 0.75$\pm$0.00   \\ \hline
ST                      & \multicolumn{1}{c|}{0.50}  & \multicolumn{1}{c|}{0.60$\pm$0.02}   & \multicolumn{1}{c|}{0.89$\pm$0.01}   & 0.71$\pm$0.01   & \multicolumn{1}{c|}{0.05}     & \multicolumn{1}{c|}{0.56$\pm$0.00}   & \multicolumn{1}{c|}{0.88$\pm$0.01}   & 0.69$\pm$0.00   & \multicolumn{1}{c|}{0.40}     & \multicolumn{1}{c|}{0.79$\pm$0.00}   & \multicolumn{1}{c|}{0.88$\pm$0.02}   & 0.83$\pm$0.01   \\ \hline
LCO                     & \multicolumn{1}{c|}{0.80}  & \multicolumn{1}{c|}{0.71$\pm$0.03}   & \multicolumn{1}{c|}{0.94$\pm$0.04}   & 0.81$\pm$0.03   & \multicolumn{1}{c|}{0.20}     & \multicolumn{1}{c|}{0.66$\pm$0.01}   & \multicolumn{1}{c|}{0.82$\pm$0.04}   & 0.73$\pm$0.02   & \multicolumn{1}{c|}{0.30}     & \multicolumn{1}{c|}{\textbf{0.85$\pm$0.00}}   & \multicolumn{1}{c|}{0.96$\pm$0.02}   & 0.90$\pm$0.01   \\ \hline
FedFair                 & \multicolumn{1}{c|}{0.10} & \multicolumn{1}{c|}{\textbf{0.71$\pm$0.02}}   & \multicolumn{1}{c|}{\textbf{0.97$\pm$0.03}}   & \textbf{0.82$\pm$0.02}   & \multicolumn{1}{c|}{0.01}     & \multicolumn{1}{c|}{0.66$\pm$0.01}   & \multicolumn{1}{c|}{0.90$\pm$0.04}   & \textbf{0.76$\pm$0.01}   & \multicolumn{1}{c|}{0.10}     & \multicolumn{1}{c|}{\textbf{0.85$\pm$0.00}}   & \multicolumn{1}{c|}{\textbf{0.98$\pm$0.02}}   & \textbf{0.91$\pm$0.01}   \\ \hline
\end{NiceTabular}%
}
\end{table*}

\subsection{Fairness Measure}\label{sec:fm}

In our experiments, the \textbf{fairness} of a trained model $f_\theta$ is evaluated by
\begin{equation}
\label{eq:fairmetric}
    \text{fairness}=1 - \text{DEO}(f_\theta),    
\end{equation}
where $\text{DEO}(f_\theta)$ is the difference of equal opportunity (DEO)~\cite{donini2018empirical} of the trained model $f_\theta$, and it is computed by
\begin{equation}
    \begin{aligned}
        \text{DEO}(f_\theta)=
        &|P\{f_\theta(x) > 0 \mid y = c, s=a\} - \\
        & P\{f_\theta(x) > 0 \mid y = c, s=b\}|.
    \end{aligned}
\end{equation}

Essentially, for an instance $x$ with a ground truth label $y=c$, $c$ is the protected class,  
$\text{DEO}(f_\theta)$ measures the absolute difference of the probabilities of $x$ being predicted by $f_\theta$ to be in the class $c$ when $x$ belongs to the groups $a$ and $b$, respectively.
A smaller value of $\text{DEO}(f_\theta)$ means $f_\theta$ is more fair and a larger value means $f_\theta$ is less fair.

According to~\cite{donini2018empirical}, $\text{DEO}(f_\theta)$ is equivalent to DGEO when the loss function is a discrete function, that is,
$\ell(f_\theta(\mathbf{x}), y) = \mathbb{1}_{y*f_\theta(\mathbf{x})\leq 0}$.
The value range of $\text{DEO}(f_\theta)$ is $[0,1]$, but the value range of DGEO is unbounded. Therefore, $\text{DEO}(f_\theta)$ is a better choice to develop an evaluation metric of fairness in our experiments.
As a result, we use $\text{DEO}(f_\theta)$ to develop the ``fairness'' in Equation~\eqref{eq:fairmetric}. Obviously, ``fairness'' also has a value range of $[0,1]$ and a larger value of fairness means $f_\theta$ is more fair.

We also evaluate the overall performance of $f_\theta$ by the harmonic mean of its fairness and accuracy, which is simply referred to as the \textbf{harmonic mean (HM)} when the context is clear.
For ST, EO and CEO that produce $N$ different models for $N$ clients, respectively, we report the average accuracy, the average fairness, and the average of the harmonic means of the $N$ models on the testing data sets.

\subsection{Fairness and Accuracy of Trained Models}\label{sec:fmtp}
Each of ST, LCO, AF, and FedFair has some hyper-parameters that control the trade-off between fairness and accuracy. 
To comprehensively compare the performance of these methods, we conduct a grid search on the hyper-parameters.
This produces multiple trained models for every method, where each model is trained using a unique value of the hyper-parameter.

For every method, we report the performance of the models that achieve the top-5 largest harmonic means.
For each trained model, the fairness and the accuracy are plotted as a single point in Figure~\ref{fig:overall_perf}.
Since we train 5 models for each method, every method has 5 points in Figure~\ref{fig:overall_perf}.
Some methods show less than five points in a figure because there are some of those points overlap with each other.

We also report the performance of FedAvg, EO, and CEO in Figure~\ref{fig:overall_perf}. 
However, since these methods do not provide a hyper-parameter to control the trade-off between fairness and accuracy, we can only use their default settings to run each of them once. As a result, there is only one point for each of FedAvg, EO, and CEO in every figure of Figure~\ref{fig:overall_perf}.

\begin{figure*}[t]
\centering
\subfigure[\scriptsize DRUG (IID), LR]{\includegraphics[width=\figurewidthtwo]{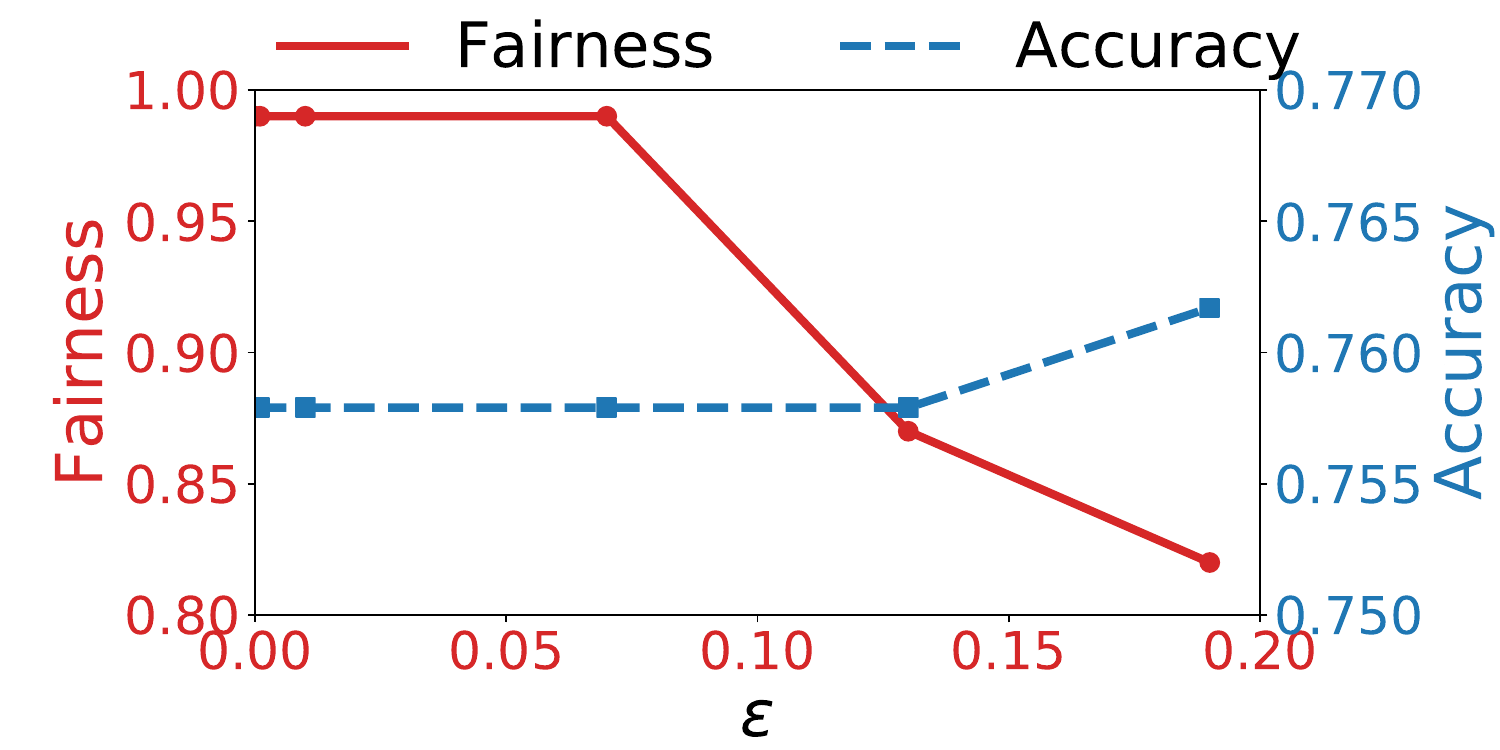}}
\hspace{\figurehspace}
\subfigure[\scriptsize DRUG (IID), NN]{\includegraphics[width=\figurewidthtwo]{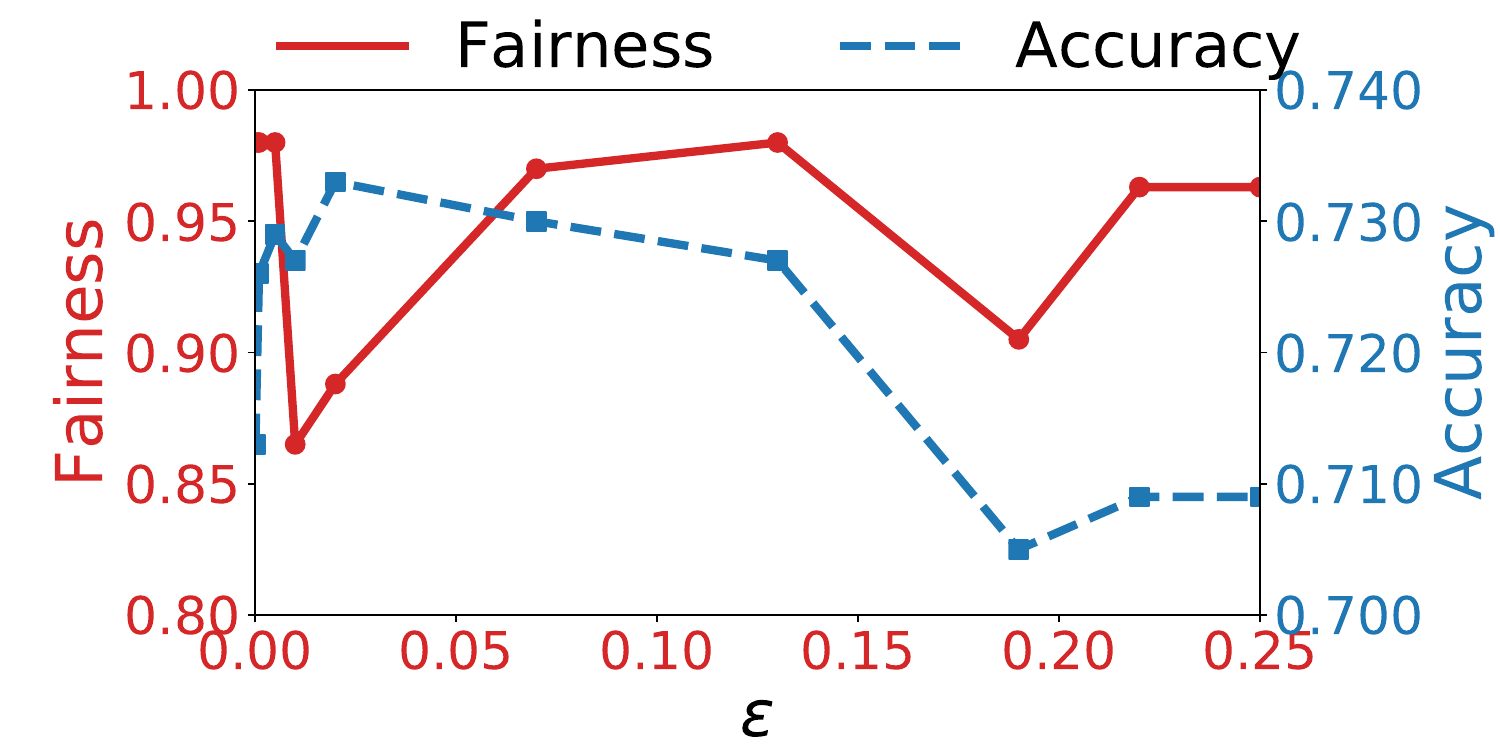}}
\hspace{\figurehspace}
\subfigure[\scriptsize DRUG (non-IID), LR]{\includegraphics[width=\figurewidthtwo]{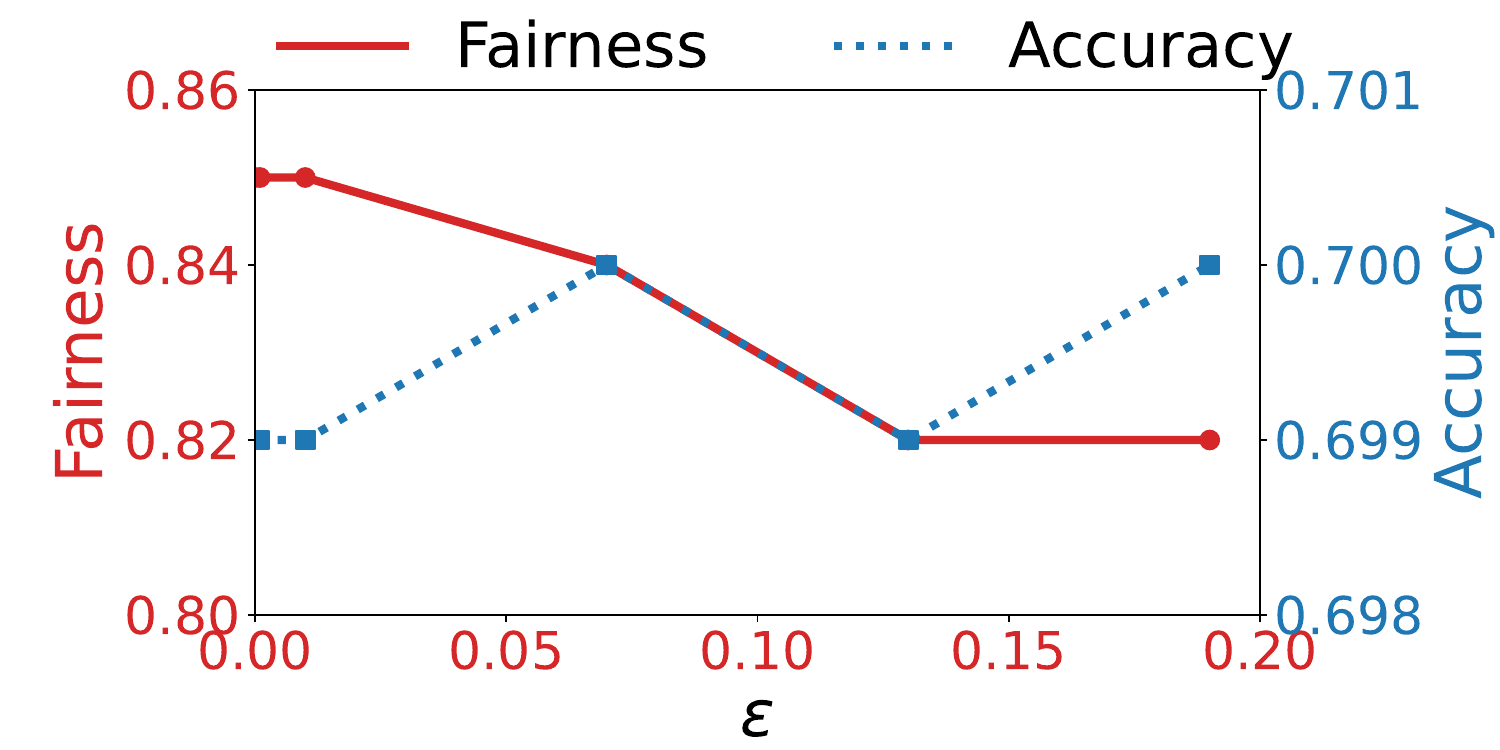}}
\hspace{\figurehspace}
\subfigure[\scriptsize DRUG (non-IID), NN]{\includegraphics[width=\figurewidthtwo]{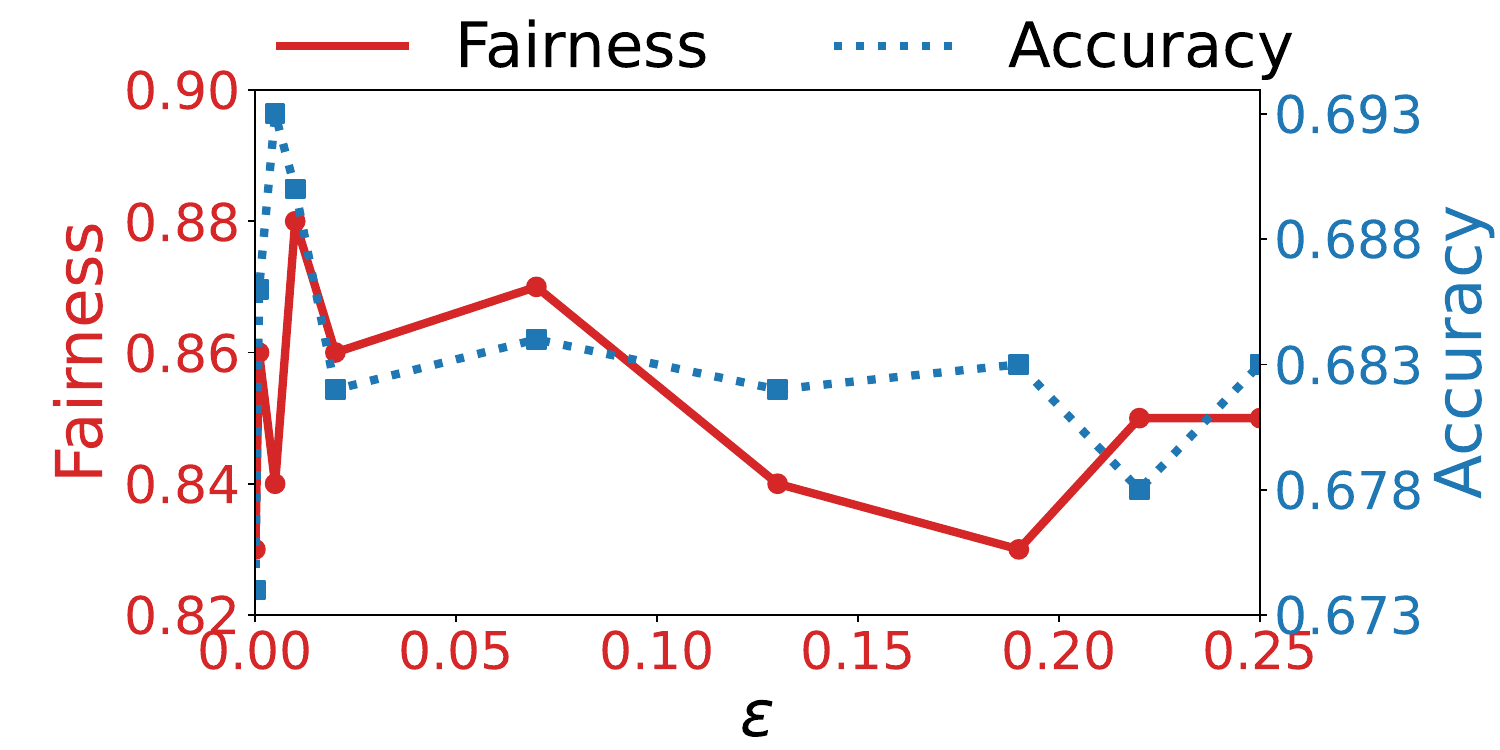}}
\subfigure[\scriptsize COMPAS (IID), LR]{\includegraphics[width=\figurewidthtwo]{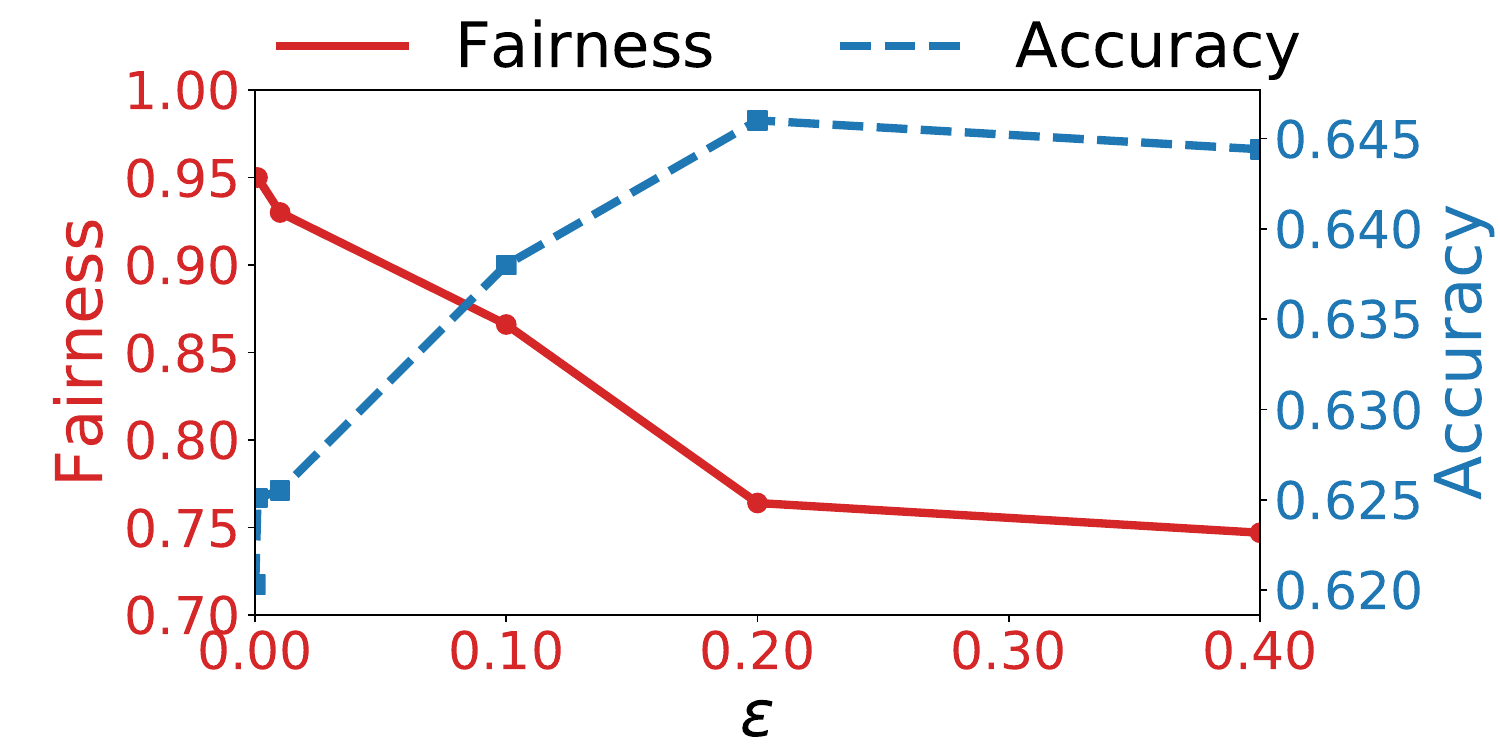}}
\hspace{\figurehspace}
\subfigure[\scriptsize COMPAS (IID), NN]{\includegraphics[width=\figurewidthtwo]{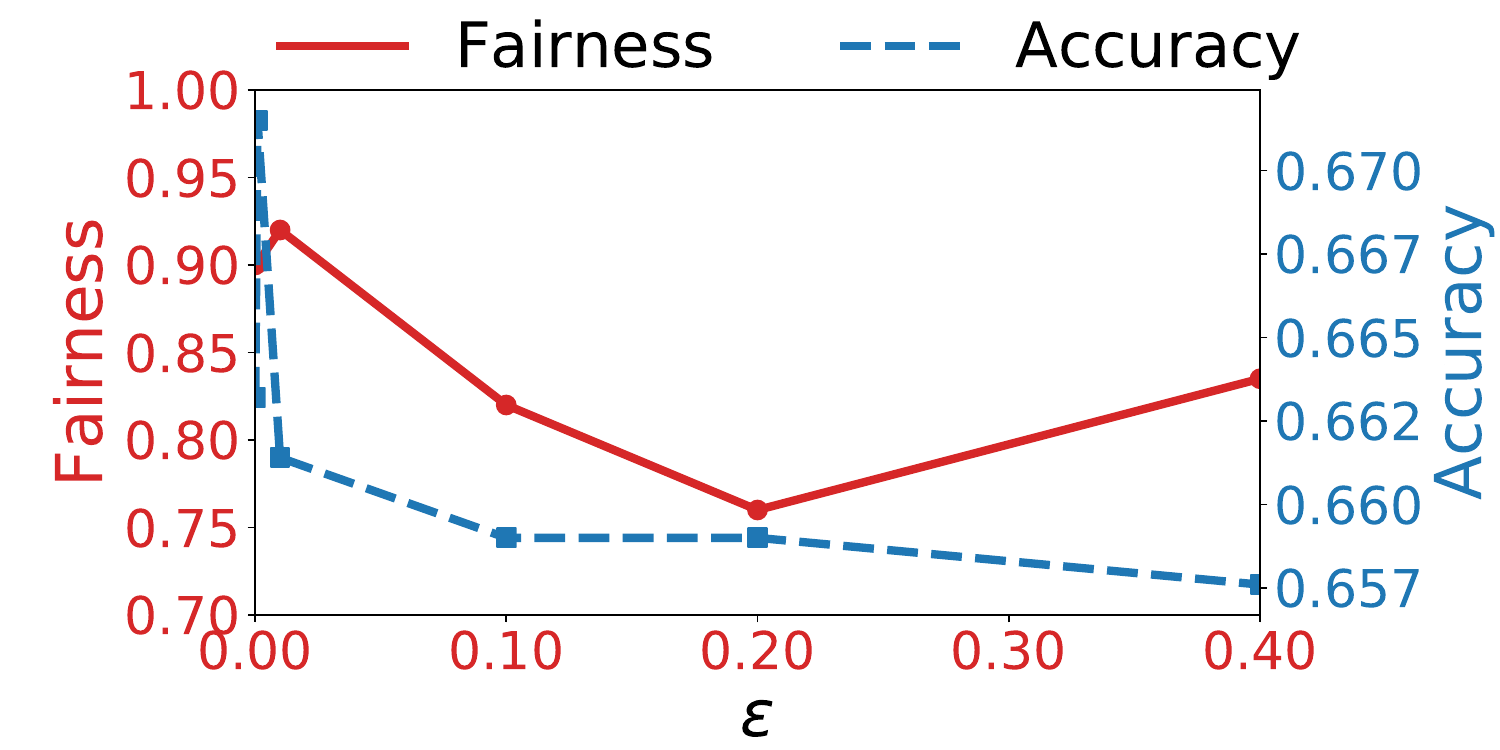}}
\hspace{\figurehspace}
\subfigure[\scriptsize COMPAS (non-IID), LR]{\includegraphics[width=\figurewidthtwo]{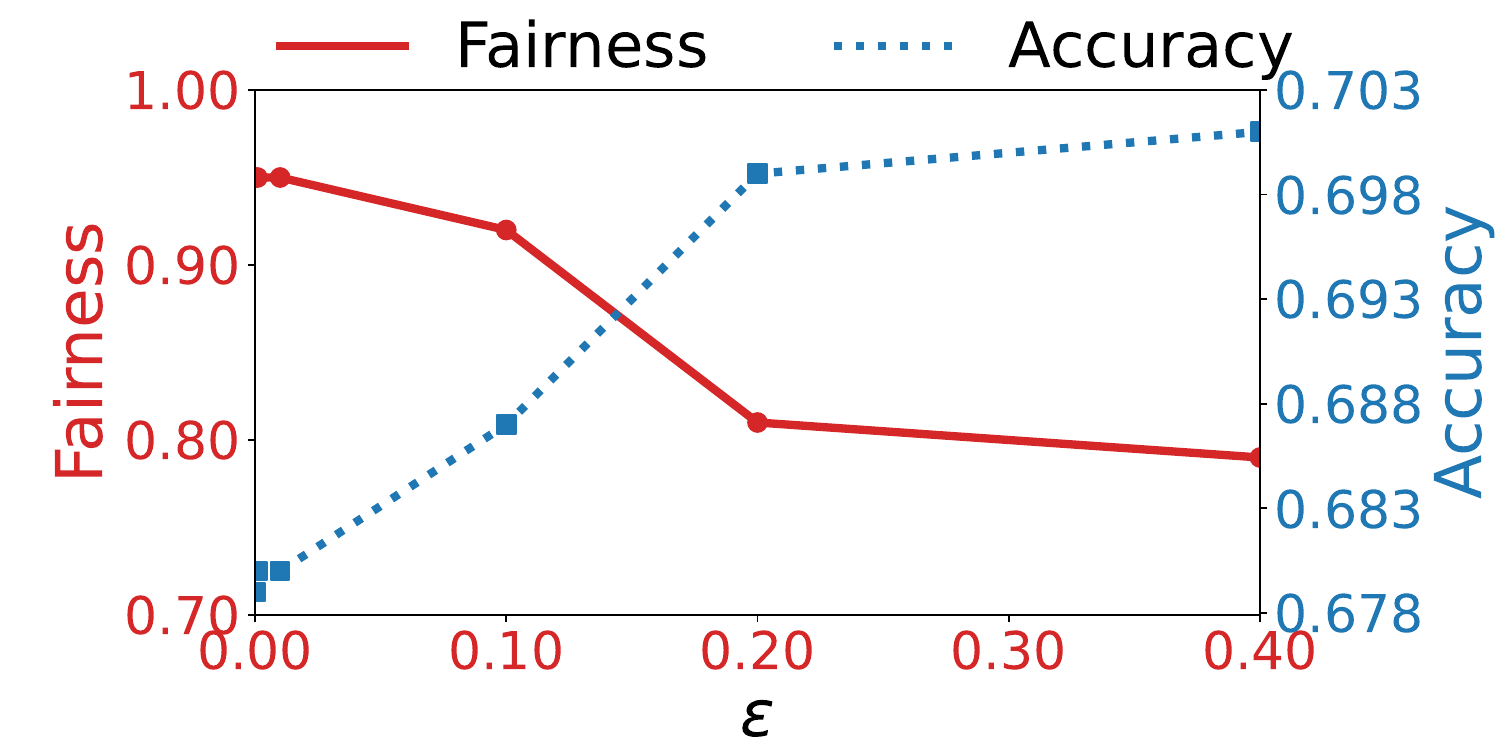}}
\hspace{\figurehspace}
\subfigure[\scriptsize COMPAS(non-IID), NN]{\includegraphics[width=\figurewidthtwo]{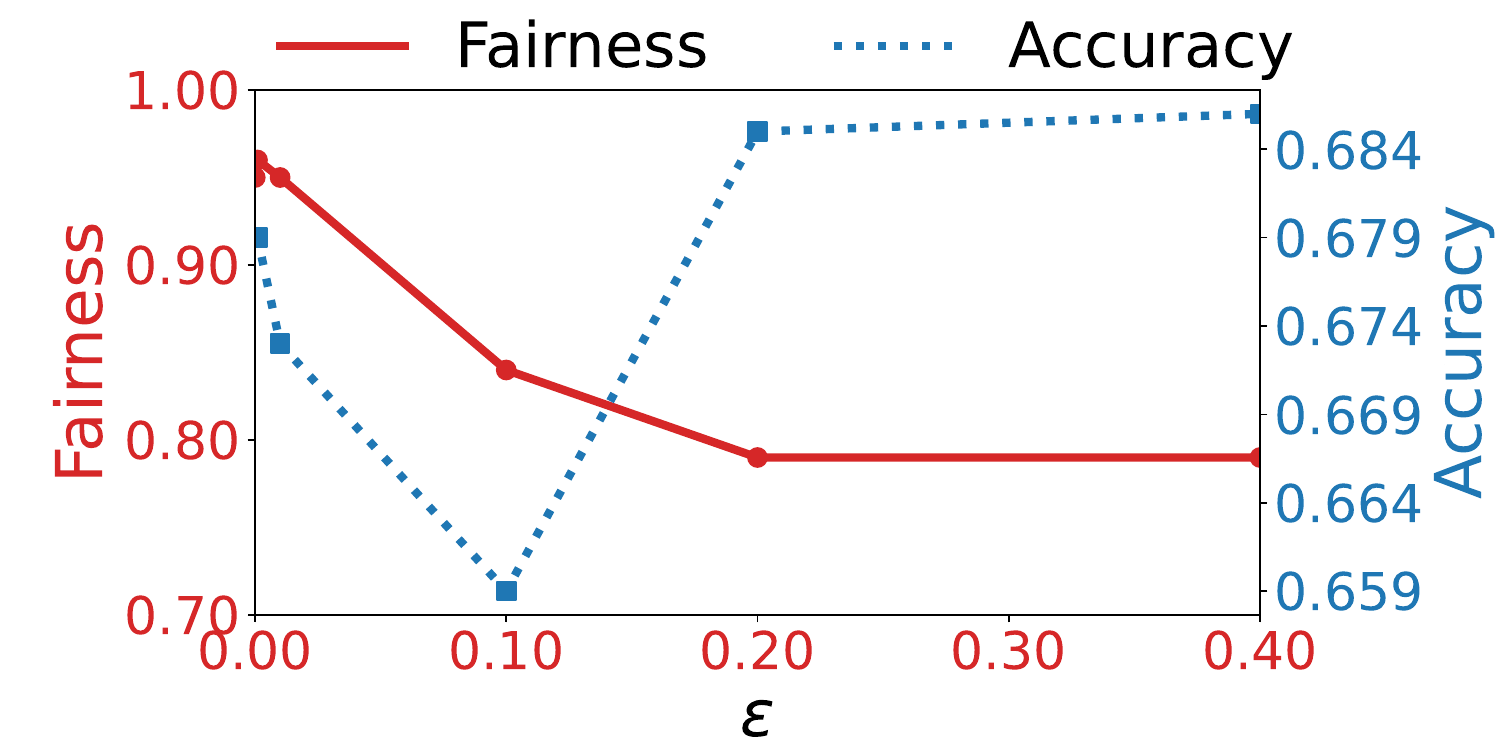}}
\subfigure[\scriptsize ADULT (IID), LR]{\includegraphics[width=\figurewidthtwo]{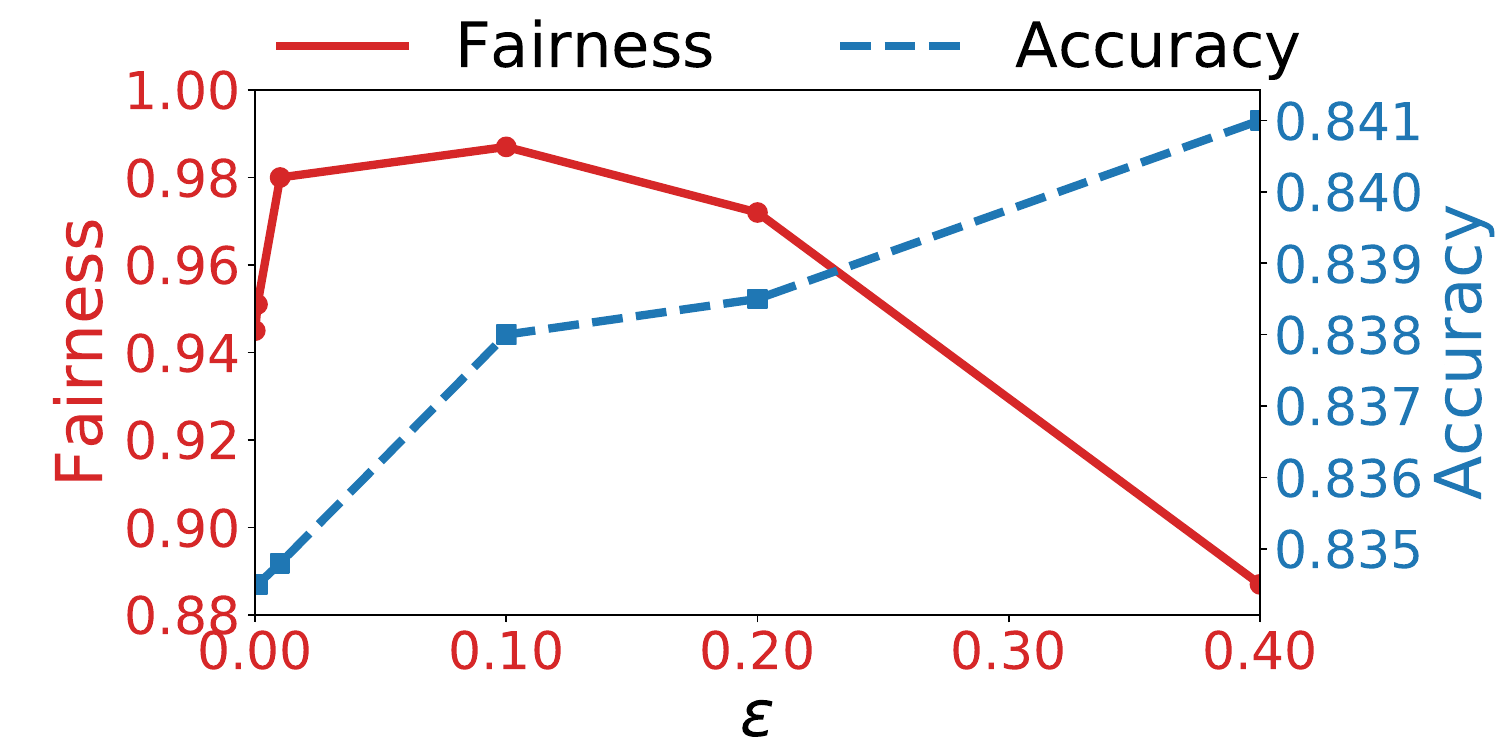}}
\hspace{\figurehspace}
\subfigure[\scriptsize ADULT (IID), NN]{\includegraphics[width=\figurewidthtwo]{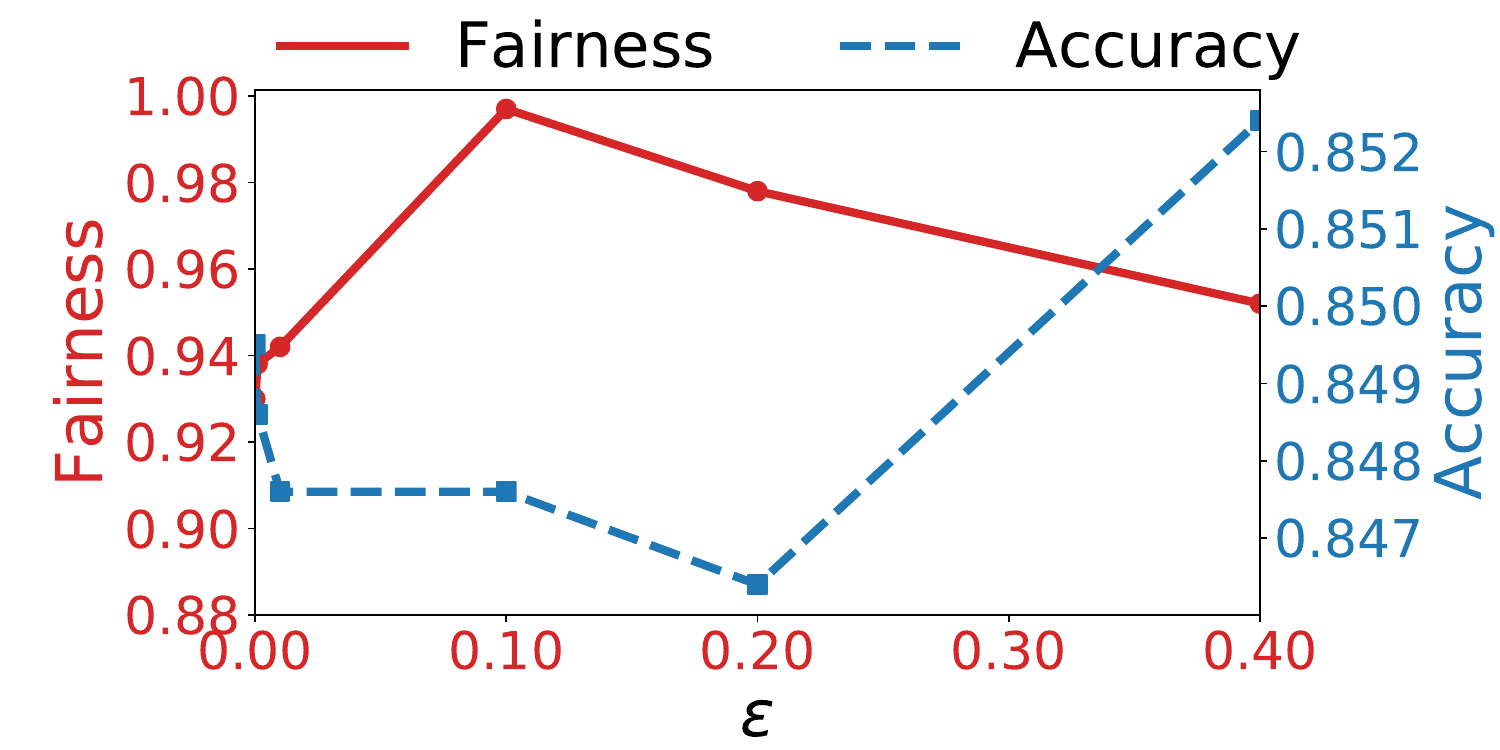}}
\hspace{\figurehspace}
\subfigure[\scriptsize ADULT (non-IID), LR]{\includegraphics[width=\figurewidthtwo]{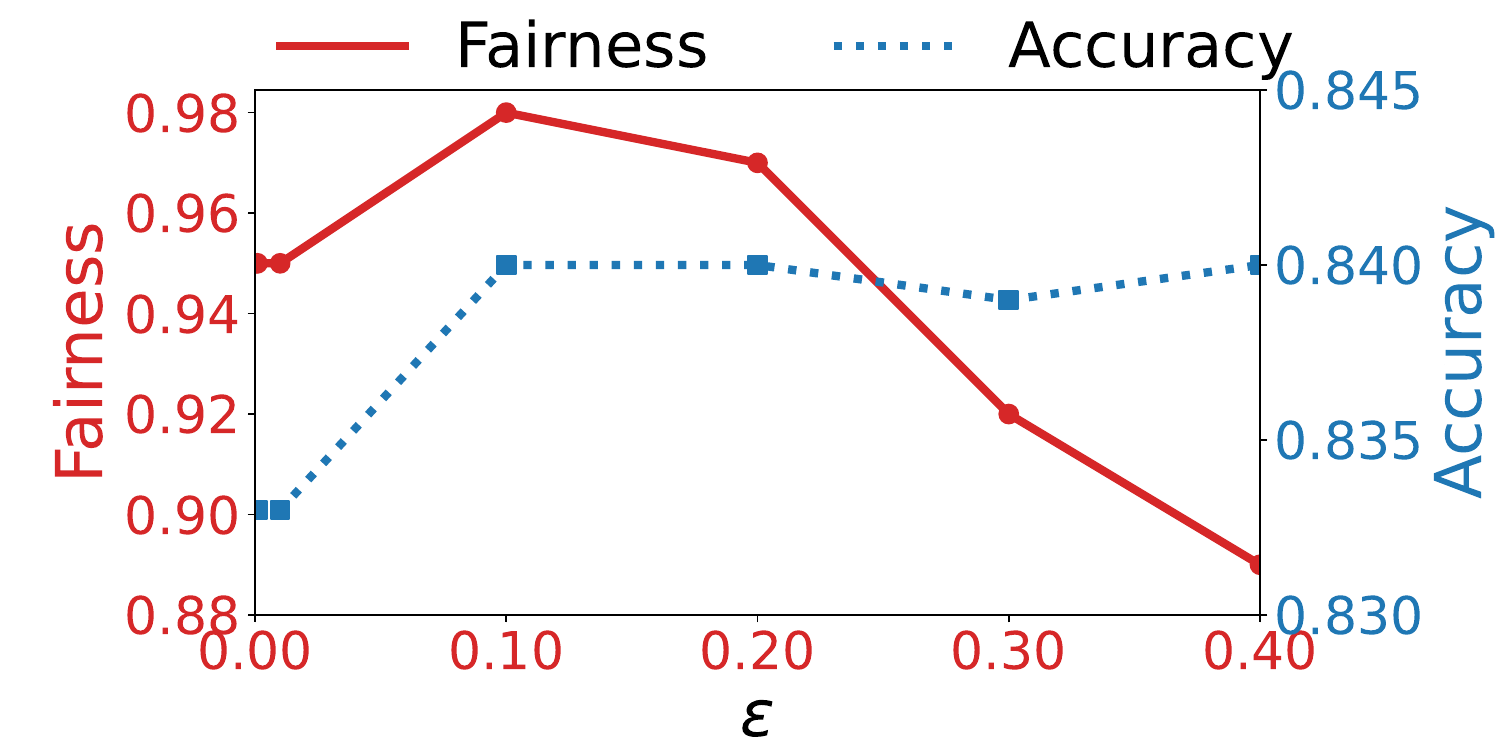}}
\hspace{\figurehspace}
\subfigure[\scriptsize ADULT (non-IID), NN]{\includegraphics[width=\figurewidthtwo]{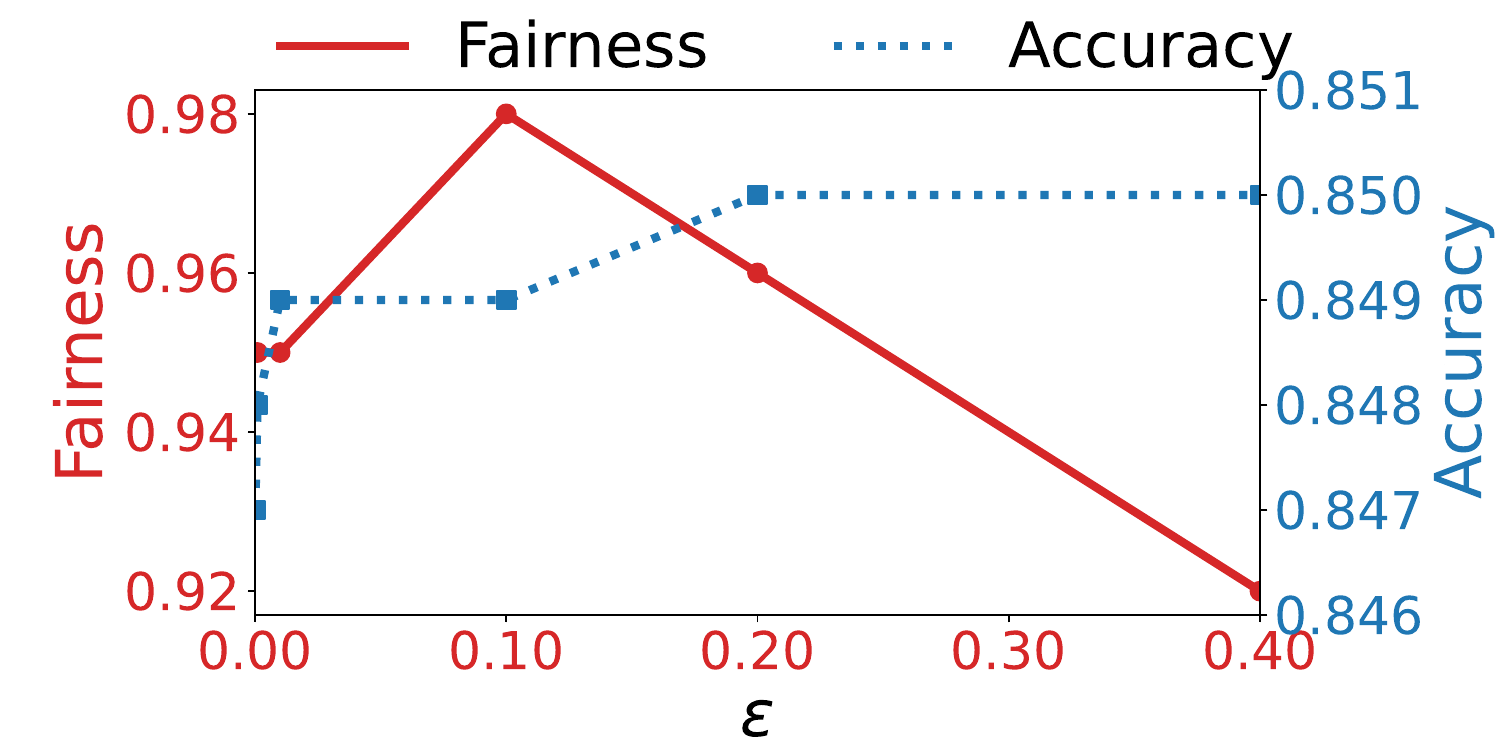}}
\caption{The effect of parameter $\epsilon$ on fairness and accuracy of the logistic regression (LR) models and neural network (NN) models trained by FedFair.
}
\label{fig:para_anal_acu_fair}
\end{figure*}

As shown in Figure~\ref{fig:overall_perf}, 
the models trained by ST do not perform well in either fairness or accuracy due to the lack of collaboration among clients during training. 
The model trained by FedAvg cannot achieve a good fairness due to the ignorance of fairness during training.
The models post-processed by EO and CEO cannot achieve a good fairness and a good accuracy at the same time, because the improvements on their fairness are usually gained at the cost of sacrificing accuracy~\cite{pessach2020algorithmic}.

The in-processing methods, LCO, AF, and FedFair, 
achieve better results, because they can explicitly impose the required trade-off between accuracy and fairness in the objective functions to find a good balance during training~\cite{woodworth2017learning}.

The accuracies of the models trained by LCO, AF, and FedFair are mostly comparable, 
but FedFair always achieves the best fairness performance. 
This demonstrates the outstanding performance of FedFair in training fair models with high accuracies.

The accuracies of the logistic regression models in Figure~\ref{fig:overall_perf}(a) are slightly better than the accuracies of the corresponding neural network models in Figure~\ref{fig:overall_perf}(b) because the neural network models tend to overfit the small training data of DRUG.

We can also conclude from Figure~\ref{fig:overall_perf} that the fairness of the models trained by LCO is weaker than that of FedFair 
because the LCO problem quickly becomes infeasible when $\epsilon$ gets small, which makes it impossible to improve the fairness of the models trained by LCO.
To the contrary, the FedFair problem stays feasible for small values of $\epsilon$, thus it is able to train models with a much higher fairness than LCO with only slight overhead in accuracy.


Tables~\ref{Table:best_perf_IID} and~\ref{Table:best_perf_non_IID} summarize the performance of all the compared methods on the data sets in the IID setting and the non-IID setting, respectively. 
To analyze the variance of the performance, for each of the IID setting and the non-IID setting, we repeat the random data set construction process introduced in Section~\ref{sec:data} to construct four additional data sets for each of DRUG, COMPAS, and ADULT. 
Every additional data set is randomly constructed using a unique random seed to generate the $N$ private training data sets and the testing data set.
For each of DRUG, COMPAS, and ADULT, including the first data set we constructed at the beginning, in total we have a collection of five randomly constructed data sets for the IID setting and another collection of five randomly constructed data sets for the non-IID setting.

For FedAvg, AF, LCO, and FedFair, we train five models on the five random data sets in the same collection. Then, we evaluate the mean and standard deviation of the accuracies, fairness, and harmonic means of the five models. 
For EO, CEO, and ST, we train one model per client on each of the five random data sets in the same collection. This produces $5N$ models for each collection of random data sets, where $N$ is the number of clients. Then, we evaluate the mean and standard deviation of the accuracies, fairness, and harmonic means of the $5N$ models. For ST, AF, LCO, and FedFair, we use the optimal hyper-parameter that achieves the best harmonic mean performance.
FedAvg, EO, and CEO are evaluated with the default settings because they do not provide a hyper-parameter to control the trade-off between accuracy and fairness.

As shown in Tables~\ref{Table:best_perf_IID} and~\ref{Table:best_perf_non_IID}, 
in most cases, the accuracies of the models trained by FedFair are comparable with the other methods, but FedFair achieves the best fairness in most of the cases and always achieves the best harmonic mean.
This demonstrates the outstanding performance of FedFair in training fair models with high accuracies.

\subsection{Effect of Parameter $\epsilon$}
\label{sec:para_anal_afe_dgeo}

In this subsection, we first analyze how the parameter $\epsilon$ affects the fairness and accuracy of the logistic regression models trained by FedFair, then we analyze the effect of $\epsilon$ on the neural network models trained by FedFair.

Figure~\ref{fig:para_anal_acu_fair} shows the performance of the logistic regression models trained by FedFair. 
For each data set, we use FedFair to train a logistic regression model $f_\theta$ with different values of $\epsilon$. The sets of $\epsilon$ we use are $\{10^{-4}, 10^{-3}, 0.01, 0.07, 0.13, 0.19\}$ for DRUG and $\{10^{-4}, 10^{-3}, 0.01, 0.1, 0.2, 0.4\}$ for COMPASS and ADULT.
We report the performance of $f_\theta$ in fairness and accuracy on the testing data.

As it is shown in Figures~\ref{fig:para_anal_acu_fair}(a), \ref{fig:para_anal_acu_fair}(c), \ref{fig:para_anal_acu_fair}(e), and \ref{fig:para_anal_acu_fair}(g), 
reducing $\epsilon$ significantly improves the fairness of $f_\theta$, since a smaller $\epsilon$ requires $f_\theta$ to have a smaller DGEO, that is, tending to be fairer.
However, a smaller $\epsilon$ does not always produce a better fairness here.
The reason is that the evaluation metric of fairness is defined based on the notion of DEO$(f_\theta)$, 
which is equivalent to DGEO only if a discrete loss function 
$\ell(f_\theta(\mathbf{x}), y)=\mathbbm{1}_{y*f_\theta(\mathbf{x})\leq 0}$
is used~\cite{donini2018empirical}. 
The discrete loss function cannot be used to train the logistic regression model $f_\theta$, so we use a continuous loss function, which makes DGEO not equivalent to DEO$(f_\theta)$.


The value of $\epsilon$ also has an effect on the accuracy of $f_\theta$.
As it is shown in Figures~\ref{fig:para_anal_acu_fair}(a), \ref{fig:para_anal_acu_fair}(e), and \ref{fig:para_anal_acu_fair}(i), on the data sets in the IID setting. The accuracy of $f_\theta$ drops when $\epsilon$ becomes smaller, because reducing $\epsilon$ induces a tighter federated DGEO constraint, which shrinks the feasible region of the FedFair problem and makes it harder to find a good solution.
As shown in Figure~\ref{fig:para_anal_acu_fair}(c),
on the DRUG data set in the non-IID setting, the accuracy of $f_\theta$ does not always drop when $\epsilon$ becomes smaller, because the local data sets on different clients are not identically and independently distributed.
However, this does not affect the practical performance of FedFair much. As discussed in Section~\ref{sec:fmtp}, FedFair always achieves the best performance on all data sets in both the IID and the non-IID settings.

Next, we analyze the effect of $\epsilon$ on the performance of the neural network models trained by FedFair in Figure~\ref{fig:para_anal_acu_fair}. 
The values taken by $\epsilon$ are $\{10^{-4}, 10^{-3}$, 0.005, 0.01, 0.02, 0.07, 0.13, 0.19, 0.22, 0.25$\}$ for DRUG and $\{10^{-4}, 10^{-3}, 0.01, 0.1, 0.2, 0.4\}$ for COMPAS and ADULT.

We can see in Figure~\ref{fig:para_anal_acu_fair} that the accuracies of the neural network models do not always drop with the decrease of $\epsilon$. This is largely due to the high model complexity and high non-linearity of the neural network models, where a tighter federated DGEO constraint induced by a smaller $\epsilon$ may act as a regularization term to improve the testing accuracies of the neural network models.
We can also observe that a smaller $\epsilon$ does not always induce a larger fairness for the neural network models.
This is because the notion of DEO$(f_\theta)$, from which we develop fairness, is not equivalent to DGEO when we use cross-entropy loss to train neural network models~\cite{donini2018empirical}.
\section{Conclusions}
\label{sec:con}
In this paper, we tackle the task of training fair models in federated learning. We first propose an effective federated estimation method to accurately estimate the fairness of a model and analyze why it achieves a smaller estimation variance than locally estimating the fairness of a model on every client.
Based on the federated estimation, we develop a fairness constraint that can be smoothly incorporated into an effective federated learning framework to train high-performance fair models without infringing the data privacy of any client.


\bibliographystyle{IEEETran}
\bibliography{references_full}

\end{document}